\documentclass[conference]{IEEEtran}
\usepackage{times}

% numbers option provides compact numerical references in the text. 
\usepackage[numbers]{natbib}
\usepackage{multicol}
\usepackage[bookmarks=true]{hyperref}

% \usepackage[margin=1.25in]{geometry}
%\usepackage{times}

% numbers option provides compact numerical references in the text. 

\usepackage[numbers]{natbib}
\usepackage{multicol}
\usepackage[bookmarks=true]{hyperref}
\usepackage{amsmath}
\usepackage{amssymb}
\usepackage{color}
\usepackage{graphicx}
\usepackage{appendix}
\graphicspath{{images/}}
\usepackage{amsmath}
\usepackage{cleveref}
\usepackage{xspace}
\usepackage{subcaption}
\usepackage[normalem]{ulem}

\usepackage{setspace}
%\linespread{2}

%\input{../macros/all}
% last update: 2018-06-10
% created by Ching-An Cheng

% PACKAGES
% math
\usepackage{amsmath}
\usepackage{amsfonts}
\usepackage{amssymb}
\usepackage{amsthm}
\usepackage{bm}
\usepackage{bbm}
\usepackage{mathtools}
\usepackage{enumitem}
\usepackage{thmtools,thm-restate}
% algorithms
% \usepackage{algorithm}
% \usepackage{algorithmic}
% \usepackage{algpseudocode}
\usepackage{algorithm}
\usepackage{algorithmicx}
\usepackage{algpseudocode}
% ref
%\usepackage{natbib}
% misc
\usepackage{color}
\usepackage{graphicx}
\usepackage{comment}
%\usepackage[latin1]{inputenc} % for German

% EDITS
%\newcommand{\todo}[1]{{\leavevmode\color{red}TODO: #1}}

% SHORTCUTS
% theorem setting

%\renewcommand\qedsymbol{$\square$}

%\theoremstyle{plain}
%\newtheorem{lemma}{Lemma}[section]
%\newtheorem{theorem}{Theorem}[section]
%\newtheorem{proposition}{Proposition}[section]
%\newtheorem{corollary}{Corollary}[section]

%\theoremstyle{definition}
%newtheorem{definition}{Definition}[section]
%\newtheorem{example}{Example}[section]

%\theoremstyle{remark}
%\newtheorem{remark}{Remark}[section]

% fonts

 % some conflicts with url

\def\Rbb{\mathbb{R}}

% math 
\def\R{\Rbb}

\def\*{\star}

\newcommand{\energize}{\mathrm{energize}}

\usepackage[dvipsnames]{xcolor}

\newcommand{\mxi}{\boldsymbol{\xi}}

\newcommand{\mtau}{\boldsymbol{\tau}}

\newcommand{\Lag}{\mathcal{L}}
\newcommand{\Ham}{\mathcal{H}}

\newcommand{\q}{\mathbf{q}}
\newcommand{\qd}{{\dot{\q}}}
\newcommand{\qdd}{{\ddot{\q}}}

\newcommand{\vv}{\mathbf{v}}

\newcommand{\f}{\mathbf{f}}
\newcommand{\h}{\mathbf{h}}

\newcommand{\zero}{\mathbf{0}}

\newcommand{\A}{\mathbf{A}}
\newcommand{\B}{\mathbf{B}}

\newcommand{\I}{\mathbf{I}}

\newcommand{\M}{\mathbf{M}}

\newcommand{\mP}{\mathbf{P}}

\newcommand{\V}{\mathbf{V}}

\newcommand{\wt}[1]{{\widetilde{#1}}}
\newcommand{\wh}[1]{{\widehat{#1}}}

\usepackage{amsmath}  % Conflicts w/ WAFR style template
\usepackage{amssymb}
\usepackage{accents}  % Conflicts w/ WAFR style template
\usepackage{amsthm}

\theoremstyle{plain}
\newtheorem{theorem}{Theorem}[section]
\newtheorem{lemma}[theorem]{Lemma}
\newtheorem{proposition}[theorem]{Proposition}
\newtheorem{corollary}[theorem]{Corollary}

\theoremstyle{definition}
\newtheorem{definition}[theorem]{Definition}

\theoremstyle{remark}
\newtheorem{remark}[theorem]{Remark}

% Choose some colors: https://en.wikibooks.org/wiki/LaTeX/Colors
% \newcommand{\nathan}[1]{\noindent\textcolor{ForestGreen}{Nathan:} \textcolor{MidnightBlue}{#1}}
% \newcommand{\karl}[1]{\noindent\textcolor{BrickRed}{Karl:} \textcolor{Bittersweet}{#1}}

%%%%%%%%%%%%%%%%%%%%%%%%%%%%%%%%%%%%%%%%%%%%%%%%%%%%%%%%%%%%%%%%%%%%
% The following implements a widebar command. See:
% https://tex.stackexchange.com/questions/16337/can-i-get-a-widebar-without-using-the-mathabx-package
\makeatletter
\let\save@mathaccent\mathaccent
\newcommand*\if@single[3]{%
  \setbox0\hbox{${\mathaccent"0362{#1}}^H$}%
  \setbox2\hbox{${\mathaccent"0362{\kern0pt#1}}^H$}%
  \ifdim\ht0=\ht2 #3\else #2\fi
  }
%The bar will be moved to the right by a half of \macc@kerna, which is computed by amsmath:
\newcommand*\rel@kern[1]{\kern#1\dimexpr\macc@kerna}
%If there's a superscript following the bar, then no negative kern may follow the bar;
%an additional {} makes sure that the superscript is high enough in this case:
\newcommand*\widebar[1]{\@ifnextchar^{{\wide@bar{#1}{0}}}{\wide@bar{#1}{1}}}
%Use a separate algorithm for single symbols:
\newcommand*\wide@bar[2]{\if@single{#1}{\wide@bar@{#1}{#2}{1}}{\wide@bar@{#1}{#2}{2}}}
\newcommand*\wide@bar@[3]{%
  \begingroup
  \def\mathaccent##1##2{%
%Enable nesting of accents:
    \let\mathaccent\save@mathaccent
%If there's more than a single symbol, use the first character instead (see below):
    \if#32 \let\macc@nucleus\first@char \fi
%Determine the italic correction:
    \setbox\z@\hbox{$\macc@style{\macc@nucleus}_{}$}%
    \setbox\tw@\hbox{$\macc@style{\macc@nucleus}{}_{}$}%
    \dimen@\wd\tw@
    \advance\dimen@-\wd\z@
%Now \dimen@ is the italic correction of the symbol.
    \divide\dimen@ 3
    \@tempdima\wd\tw@
    \advance\@tempdima-\scriptspace
%Now \@tempdima is the width of the symbol.
    \divide\@tempdima 10
    \advance\dimen@-\@tempdima
%Now \dimen@ = (italic correction / 3) - (Breite / 10)
    \ifdim\dimen@>\z@ \dimen@0pt\fi
%The bar will be shortened in the case \dimen@<0 !
    \rel@kern{0.6}\kern-\dimen@
    \if#31
      \overline{\rel@kern{-0.6}\kern\dimen@\macc@nucleus\rel@kern{0.4}\kern\dimen@}%
      \advance\dimen@0.4\dimexpr\macc@kerna
%Place the combined final kern (-\dimen@) if it is >0 or if a superscript follows:
      \let\final@kern#2%
      \ifdim\dimen@<\z@ \let\final@kern1\fi
      \if\final@kern1 \kern-\dimen@\fi
    \else
      \overline{\rel@kern{-0.6}\kern\dimen@#1}%
    \fi
  }%
  \macc@depth\@ne
  \let\math@bgroup\@empty \let\math@egroup\macc@set@skewchar
  \mathsurround\z@ \frozen@everymath{\mathgroup\macc@group\relax}%
  \macc@set@skewchar\relax
  \let\mathaccentV\macc@nested@a
%The following initialises \macc@kerna and calls \mathaccent:
  \if#31
    \macc@nested@a\relax111{#1}%
  \else
%If the argument consists of more than one symbol, and if the first token is
%a letter, use that letter for the computations:
    \def\gobble@till@marker##1\endmarker{}%
    \futurelet\first@char\gobble@till@marker#1\endmarker
    \ifcat\noexpand\first@char A\else
      \def\first@char{}%
    \fi
    \macc@nested@a\relax111{\first@char}%
  \fi
  \endgroup
}
\makeatother
%%%%%%%%%%%%%%%%%%%%%%%%%%%%%%%%%%%%%%%%%%%%%%%%%%%%%%%%%%%%%%%%%%%%

% Choose some colors: https://en.wikibooks.org/wiki/LaTeX/Colors

\begin{document}

\title{Fabrics: A Foundationally Stable Medium for Encoding Prior Experience}
\author{Nathan Ratliff and Karl Van Wyk}

\maketitle

\begin{abstract}
Most physical systems have dynamics functions that are just a nuisance to policies. Torque policies, for instance, usually have to effectively invert the natural classical mechanical dynamics to get their job done. Because of this, we often use controllers to make things easier on policies. For instance, inverse dynamics controllers wipe out the physical dynamics so the policy starts from a clean slate. That makes learning easier, but still the policy needs to learn everything about the problem, including aspects of a solution which are common to many other problems, such as how to make the end-effector move in a straight line, how to avoid joints and self collisions, how to avoid obstacles, etc. Over the past few years it's become standard to formulate learning not in C-space, but in end-effector space and use controllers such as Operational Space Control (OSC) to capture some of these commonalities. These controllers, whether inverse dynamics or OSC, reshape the natural dynamics of the system into a different second-order dynamical system whose behavior is more useful. And the trend is, the more useful behavior we can pack into these reshaped systems, the easier it is to learn policies.

However, OSC is from the 80's, and captures only straight line end-effector motion. There's a lot more behavior we could and should be packing into these systems. Earlier work \cite{optimizationFabricsForBehavioralDesignArXiv2020,ratliff2021finsler,vanwyk2022geometricfabrics} developed a theory that generalized these ideas and constructed a broad and flexible class of second-order dynamical systems which was simultaneously expressive enough to capture substantial behavior (such as that listed above), and maintained the types of stability properties that make OSC and controllers like it a good foundation for policy design and learning. This paper, motivated by the empirical success of the types of fabrics used in \cite{xie2022ngf}, reformulates the theory of fabrics into a form that's more general and easier to apply to policy learning problems. We focus on the stability properties that make fabrics a good foundation for policy synthesis. Fabrics create a fundamentally stable medium within which a policy can operate; they influence the system's behavior without preventing it from achieving tasks within its constraints. When a fabrics is \emph{geometric} (path consistent) we can interpret the fabric as forming a \emph{road network} of paths that the system wants to follow at constant speed absent a forcing policy, giving geometric intuition to its role as a prior. The policy operating over the geometric fabric acts to modulate speed and steers the system from one road to the next as it accomplishes its task.

We reformulate the theory of fabrics here rigorously and develop theoretical results characterizing system behavior and illuminating how to design these systems, while also emphasizing intuition throughout.
\end{abstract}

\IEEEpeerreviewmaketitle

\section{Introduction}
\label{sec:intro}

Policies all operate on underlying system dynamics: what the robot wants to do absent external control. These dynamics can be as straightforward as the underlying classical mechanical dynamics of the robot, where the system's inertia defines its Riemannian geometry, the network of paths the system would travel along absent gravity and frictions (see \cite{ratliff2021finsler} for a discussion of the connection between classical mechanics and Riemannian geometry). That mechanical system of paths, though, is often irrelevant to tasks and a hindrance to achieving a desired behavior. 

Therefore, control systems often work to reshape that geometry into something more relevant, or at least less disruptive. For instance, inverse dynamics control \cite{spong2006robot} removes the geometry entirely, replacing it with a Euclidean geometry in C-space (a blank slate), such that additional controllers can generate a desired behavior without competing with the native system geometry.

Operational space control \cite{khatib1987unified} builds upon this idea and not only clears the geometry, but also reshapes it into something more relevant to the task. Specifically, it replaces the physical geometry with a different Riemannian geometry where geodesics move the end-effector in straight lines. Policies then build off that more useful geometry to define useful task behavior. Since tasks are often more easily described in the end-effector space, this starting geometry is highly relevant to many problems---it encodes useful prior information.

However, operational space control captures only a small fraction of the commonalities among tasks. Most tasks, for instance, require some form of obstacle awareness, such as avoidance or attraction toward a surface (e.g. grasping). Moreover, robots generally avoid joint limits and self-collisions, and approach targets from a particular direction (e.g. approach a table orthogonal to a surface when touching it). Many of these behavioral elements can and should be factored out and encoded into the reshaping controller itself, and ideally we should extract these common behavioral components from data.

In this work, we formalize this concept of an underlying behavior shaping controller into what we call \emph{fabrics}. We define fabrics rigorously as \emph{conservative} autonomous second-order differential equations, show how to construct them by \emph{energizing} a generating system, and thoroughly characterize their intrinsic stability properties. This theory supports the type of fabric design used in recent fabric learning work such as \cite{xie2022ngf} and gives the theoretical foundations for encoding prior information into an underlying fabric and training policies over the top of it.

When the fabric has a particular geometric path consistency property, we call it a \emph{geometric} fabric. This path consistency gives the fabric a speed invariant road network of paths that guide the system around obstacles and other constraints and broadly encode important prior information. Policies navigating these fabrics generally follow the network of paths and need only choose when and how strongly to push the system from one path to another and how to regulate energy along the way. We provide a number of theoretical statements characterizing energy regulation and system convergence, including convergence to desired goals (the zero set of a forcing term). We tailor the theory to providing insight into the design and training of fabrics and the policies residing on them. 

Importantly, these shared fabrics, especially those learned from data \cite{xie2022ngf}, constitute well-informed priors on behavior. We discuss this perspective throughout this work. Generally, when we us the term \emph{prior}, we mean it in the broader sense than purely Bayesian probabilistic. We simply mean it acts as a way to inject prior experience into the system to improve the sample efficiency of training policies (including the manual design of policies which is, itself, an information theoretic learning process---an iterative process resulting in a policy expected to generalize to novel situations.) Note that fabrics can be used to develop probabilistic priors by running stochastic policies over them to generate distributions of trajectories. But the underlying fabric itself, which captures the essence of the encoded information in its geometry, is not probabilistic.

\subsection{Related work}

Since robots are physical systems, their dynamics are well-understood and governed by the Euler-Lagrange equation, as characterized in any number of introductory robotics text books \cite{craig2005introduction, spong2006robot}.
The mathematics of these systems is sophisticated, with Lagrangian symmetries giving rise to conserved quantities such as energy conservation \cite{taylor2005classical}, and control theorists have exploited those mathematical properties thoroughly for the stable design of complex nonlinear controllers by reshaping the systems into different, more favorable, dynamical systems \cite{behal2009lyapunov, bullo2019geometric}. These fundamental equations have also been used in the creation of modern robot simulators like \cite{macklin2019non, makoviychuk2021isaac}, which are critical tools for designing or learning robot control policies. 

In classical systems, the Euler-Lagrange equations decompose into two parts. One part is the closed system's inertial equations. This component can be shown to be geometric in nature in the sense that it produces speed-invariant paths through space \cite{bullo2019geometric,ratliff2021finsler}. The system, under the influence of only its inertia, follows the same path regardless of speed (or more generally accelerations along the direction of motion). These geometries are Riemannian, and the system's mass matrix is the Riemannian metric (see \cite{ratliff2021finsler} for a derivation). The second part includes additional forcing and damping terms. Both of these components are required to accurately model the complex, nonlinear physical phenomena in the real world. Reshaping these models into useful behavioral systems within the same class of classical mechanical systems (Riemannian geometries) is common \cite{bloch2000energyshapingI,bloch2001energyshapingII,bullo2019geometric,khatib1987unified}, and interestingly, Riemannian geodesics have also been shown to model large segments of human motion in \cite{klein2022riemannian, neilson2015riemannian}, indicating that energy efficient human motion can be largely captured by following geometric paths. However, these Riemannian systems are fundamentally limited in their expressivity for two reasons: their metrics can only be a function of position (no velocity), and the metric plays a double role of both defining the geometry of paths itself and specifying how one sub-system weights together with another. Broader classes of second-order differential equations, such as Riemannian Motion Policies (RMPs) for motion generation in \cite{ratliff2018riemannian}, aren't limited in these ways and have been shown empirically to have high-capacity for representing intricate behaviors, but are less well understood.

Recently, \cite{vanwyk2022geometricfabrics} generalized classical mechanical models to what are called geometric fabrics, building off a type of system termed a \emph{bent Finsler systems}, to expand the modeling capacity of these types of systems specifically to remove those above limitations. Geometric fabrics capture the flexibility of RMPs while being provably stable and maintaining a form of (non-Riemannian) geometric path consistency, building off the mathematics of Finsler and Spray geometry \cite{ratliff2021finsler}. With bent Finsler systems, behavioral designers were able to engineer policies that can outperform both the classical mechanical systems of geometric control and RMPs. These systems were also shown to outperform linear dynamical systems (such as Dynamic Movement Primitives, DMPs), RMPs, and a variety of baseline neural architectures like Long Short-Term Memory (LSTM) networks in learning contexts \cite{xie2022ngf}. These systems superficially resemble artificial potential fields \cite{khatib1986real}, but are built to enable the design of the \emph{geometry} rather than the potential function, which improves their regularity and dramatically boosts performance in practice. Behavior can be written directly into the underlying road network of paths, reshaping the system geometry, rather than relying on the potential function to push the system (fight) against a less relevant natural geometry.

Independently, Bylard et al \cite{bylard2021pbds} developed what are called Pullback Bundle Dynamical Systems (PBDS) as a rigorously covariant version of the Geometric Dynamical Systems (GDS) developed earlier in \cite{cheng2018rmpflow}. They approached the problem as developing Riemannian metrics on the tangent bundle (space of positions and velocities) of a given manifold. While rigorous, those systems are analogous in their representational capacity to the Lagrangian fabrics outlined in \cite{optimizationFabricsForBehavioralDesignArXiv2020} and from the analysis in \cite{vanwyk2022geometricfabrics}, it's now know they lack the flexibility to independently represent both the geometry of paths and the metric independent of one another. For that reason, PBDS rely heavily on non-geometric potential functions similar to the standard potential shaping techniques of geometric control \cite{bullo2019geometric}. The perspective we develop here builds from the results on nonlinear (spray) geometries of paths detailed in \cite{ratliff2021finsler}, and requires less technical machinery than developing metrics directly on the tangent bundle. Finsler fabrics and more broadly Lagrangian fabrics are analogous to PBDS, but geometric fabrics are a fundamentally more expressive class of systems.

All of these earlier works, while elegant in their generalization of classical mechanical or Riemannian systems, often require complex tensor calculations, such as evaluating the Euler-Lagrange equation, to fully follow the theory. This complexity introduces challenges, especially when learning is involved. Here, we reformulate these systems in a way that's more intuitive, easier to handle both conceptually and implementationally, and emphasizes the role they play as fundamentally stable mediums for guiding policies.

\subsection{A note on generalized notation}
\label{sec:GeneralizedNotation}

Often a classical mechanical, (bent) Finsler, RMP system, etc. takes the form $\M(\q,\qd)\qdd + \mxi(\q,\qd) = \zero$, and when forced by some potential function $\psi(\q)$, and damped by a dissipating term $\B(\q,\qd) \qd$, we get
\begin{align}
    \M \qdd + \mxi = -\partial\psi - \B\qd.
\end{align}
The resulting acceleration is
\begin{align} \label{eqn:DecomposedSystemPrelim}
    \qdd &= -\M^{-1}\mxi - \M^{-1}\big(\partial\psi + \B\qd\big) \\\label{eqn:DecomposedSystem}
    &= \wt{\h}(\q,\qd) + \f(\q,\qd),
\end{align}
where $\wt{\h} = -\M^{-1}\mxi$ and $\f=-\M^{-1}\big(\partial\psi + \B\qd\big)$. That first term $\wt{\h}$ is conservative and often geometric (and/or unbiased in the sense that it won't push the system away from rest) and the second term both forces away from $\wt{\h}$ and regulates the injection and dissipation of energy.

In this work, we address general decomposed systems of the form given in Equation~\ref{eqn:DecomposedSystem}. We use $\wt{\h}$ to denote a fabric (conservative term) and the $\f$ to denote a forcing term that pushes against the fabric and regulates energy. The tilde denotes that the term is conservative (a fabric), to distinguish it from a generator that creates a fabric through energization (see Definition~\ref{def:Energization} and Lemma~\ref{lma:EnergizingCreatesFabrics}).

This decomposition is very general and covers many systems, including the ones above. If $\wt{\h}$ has an associated system metric $\M$, it's often useful to think of forcing policies as force functions such as $\pi(\q,\qd) = -\partial\psi-\B\qd$ which are transformed by the system metric into the forcing term $\M^{-1}\pi(\q,\qd)$ such as in Equation~\ref{eqn:DecomposedSystemPrelim}. Note that strong Eigen-directions of $\M$ trim away components of the force. In that sense, $\M$ defines the system priorities, intuitively defining which directions in space important to $\wt{\h}$ and which directions that aren't.

\subsection{Overview}

We begin in Section~\ref{sec:FundamentalStability} with a series of results characterizing the fundamental stability of fabrics as a medium for policies to operate on. Throughout, we define and develop the theory of fabrics generally but also detail the important role path consistency plays in the more specific case of \emph{geometric} fabrics which form a concrete road network of paths for policies to operate across.

We start by defining fabrics to be conservative second-order autonomous differential equations in Definition~\ref{def:Fabrics} and show that energy conservation, by itself, gives the fabric important stability properties. Terminologically, in Definition~\ref{def:NavigationAcrossFabrics}, we decompose the full system into $\qdd = \wt{\h} + \f$, where $\wt{\h}$ is the fabric and $\f$ is the policy, and together they form a \emph{forced system}. This terminology derives from the force form $\M(\q,\qd) \qdd + \mxi = \pi(\q,\qd)$ where $\M$ is a symmetric positive definite system metric and we have the relations $\wt{\h} = -\M^{-1}\mxi$ and $\f = \M^{-1}\pi$. Here $\pi$ is called a \emph{force} policy. Intuitively, a system traveling along a fabric will always maintain constant energy if the policy does nothing, and the policy can always simply dissipate energy to come to a stop. Over a bounded period of time, a bounded policy can only inject a finite amount of energy into the system, so it always has the means to easily bring the system back to rest. Fabrics, in that sense, innately form a fundamentally stable medium across which the policy operates.

We show in Lemma~\ref{lma:EnergizingCreatesFabrics} that we can always transform any given second-order autonomous system into a fabric simply by speeding up or slowing down along the direction of motion, and Definition~\ref{def:Energization} gives a specific \emph{energization} transform that does that. Importantly, Proposition~\ref{prop:GeometricFabrics} then shows that if the underlying generating system is geometric (path consistent), energization stabilizes it without changing the collection of paths (since it operates entirely by accelerating along the direction of motion which is known to leave paths unchanged in geometric systems).

Then Proposition~\ref{prop:NavigatingFabrics} shows that any policy operating across a geometric fabric can be decomposed into a zero-work energy-preserving term which bends (or steers) the paths without changing the energy, and an energy regulation term which modulates speed along the direction of motion without changing the path. All policies thereby act to simply modulate the underlying fabric's energy while steering the system. When training policies, one can potentially exploit this observation to define data efficient policy parameterizations.

Section~\ref{sec:FundamentalStability} finishes with a discussion of convergence to the zero set of the forcing policy. Broadly, there are many cases where goals can be characterized by zero sets of some vector field. For instance, the local minima of a potential are the zero sets of its gradient. A forcing policy is a vector field that vanishes when it no longer wants to move the system, so the zero set of the forcing policy is a good characterization of the policy's goals.
Proposition~\ref{prop:ForcedSystemConvergence} presents some general conditions under which the forced system converges to the zero set. One of those conditions is the practical statement that if the system (with bounded accelerations) converges, it must converge to the zero set. That comes from the simple observation that the fabric is conservative and therefore wouldn't itself push the system from rest (zero energy). So if it comes to reset at the zero set, neither the fabric nor the policy wants it to move from there. It's often straightforward to design convergent systems that dissipate energy properly to bring the system to rest at a zero set, so even if we can't otherwise prove global convergence of the system, we can design practically convergent systems which are guaranteed to be at the policy's zero set when they converge. Moreover, these observations suggest that given a goal, we can parameterize the policy to ensure the policy is zero if and only if it's at the goal. Then a training system needs only learn how to modulate energy effectively to converge nicely to that zero set.

Section~\ref{sec:EnergyRegulation} moves into a more complete discussion of theoretical conditions on energy regulation. Propositions~\ref{prop:EnergyCapping1} and ~\ref{prop:EnergyCapping2} give some policy parameterizations for which we can guarantee bounded energy and a natural form of energy regulation. The main result of this section is Theorem~\ref{thm:main}, which gives a specific energy regulation formula under which any forced fabric can be guaranteed to converge to the zero set of the policy provided there exists what we call a compatible potential which we use to guide the energy regulation. 

Section~\ref{sec:ForcingEnergizedFabrics} gives a final stability analysis for a common case where the fabric has a corresponding system metric and is being forced by a damped potential function. This setting is similar to the geometric fabric setting of \cite{vanwyk2022geometricfabrics}, but more general. Importantly, we allow the underlying geometric fabric to be arbitrary and paired with any system metric. It's typically much easier to design and implement such systems than the bent Finsler geometries described in \cite{vanwyk2022geometricfabrics}.

Finally, we summarize the takeaways in Section~\ref{sec:Conclusions}.
\section{The Fundamental Stability of Fabrics}
\label{sec:FundamentalStability}

Fabrics are stable autonomous second-order differential equations that can form well-informed priors on policies by encoding behavioral information common across many tasks. Individual control policies use fabrics by navigating across them. In this section, we define fabrics and characterize their utility and fundamental stability.

Throughout this work we use the multivariate calculus notational conventions outlined in \cite{optimizationFabricsForBehavioralDesignArXiv2020}. Note that in earlier work we built in specific conditions to handle boundary conformance for manifolds with a boundary. Those boundary conditions often require systems to be unbounded (e.g. accelerations or metrics approach infinity), which is impractical for real-world implementation and numerical integration. More recently, \cite{vanwyk2022geometricfabrics} described how to integrate explicit hard constraints into the definition of systems like the ones we consider here; constraint forces effectively fold into the forcing policy making them conceptually simpler. In many cases practical implementations use terms that are softer and better conditioned to smoothly avoid constraints (e.g. added potential function in the framework of \cite{vanwyk2022geometricfabrics}). We, therefore, cover only the unconstrained setting here, and refer the reader to \cite{vanwyk2022geometricfabrics} for details on how to incorporate hard constraints.

\begin{definition}[Fabrics]\label{def:Fabrics}
An autonomous differential equation $\qdd = \wt{\h}(\q,\qd)$ is a \emph{fabric} if it conserves a Finsler energy $\Lag(\q,\qd)$.
\end{definition}

This definition states that a fabrics is simply a conservative second-order autonomous differential equation. That conservation property is what makes the fabric a nice stable medium for policy design. The following Lemma shows that the fabric itself doesn't attempt to push a system from rest. This property will enable policies to reliably navigate the fabric and converge to any given desired goal.

\begin{lemma} \label{lma:FabricsAreUnbiased}
If $\wt{\h}$ is a fabric, then $\wt{\h}(\q,\zero) = \zero$.
\end{lemma}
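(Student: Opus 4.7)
The plan is to leverage the defining properties of a Finsler energy---positive definiteness in $\qd$ together with the fact that it vanishes exactly when $\qd = \zero$---and combine them with the conservation hypothesis to show that a trajectory through any rest state must remain at rest. Once that is established, the equation of motion evaluated at $t=0$ immediately yields the claim.

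Concretely, I would fix an arbitrary configuration $\q_0$ and consider the trajectory $t \mapsto (\q(t),\qd(t))$ of $\qdd = \wt{\h}(\q,\qd)$ with initial condition $(\q(0),\qd(0)) = (\q_0,\zero)$, whose local existence follows from standard regularity of $\wt{\h}$. Because $\Lag$ is a Finsler energy, $\Lag(\q_0,\zero) = 0$, so the fabric conservation property from Definition~\ref{def:Fabrics} gives $\Lag(\q(t),\qd(t)) = 0$ for all $t$ in the interval of existence. Invoking positive definiteness---$\Lag(\q,\qd) = 0$ if and only if $\qd = \zero$---this forces $\qd(t) \equiv \zero$, and therefore $\qdd(t) \equiv \zero$. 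Substituting into the system equation at $t=0$ yields $\wt{\h}(\q_0,\zero) = \zero$, and since $\q_0$ was arbitrary the conclusion follows.

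The only mild subtlety is that the argument silently uses the strict positive definiteness of $\Lag$ in $\qd$ (so that the zero level set in velocity is precisely $\{\zero\}$), rather than merely nonnegativity. This is a built-in property of Finsler energies---by homogeneity of degree two plus strong convexity of the underlying Finsler function---so no extra hypothesis is needed. I do not expect any serious obstacle: the result is essentially a direct unpacking of ``conservative'' plus ``the energy separates $\qd = \zero$ from $\qd \neq \zero$.'' No appeal to Euler--Lagrange machinery, the Hessian metric, or the structure of $\wt{\h}$ itself is required.
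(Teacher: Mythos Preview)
Your argument is correct and is essentially the same as the paper's: both hinge on the Finsler property that $\Lag(\q,\qd)=0$ iff $\qd=\zero$, combined with energy conservation along trajectories starting at rest. The only cosmetic difference is that the paper phrases it as a contradiction (if $\wt{\h}(\q,\zero)\neq\zero$ then $\qd$ becomes nonzero, hence $\Lag$ changes) whereas you run the direct version; the paper also justifies the ``$\Lag=0\iff\qd=\zero$'' step via the identity $\Lag=\frac{1}{2}\qd^T\M_\Lag\qd$ rather than appealing to homogeneity and strong convexity, but these are equivalent.
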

\begin{proof}
Let $\Lag$ be the fabric's conserved Finsler energy. Finsler energies can be written $\Lag = \frac{1}{2}\qd^T\M_\Lag\qd$ where $\M_\Lag = \partial^2_{\qd\qd} \Lag$ (see \cite{ratliff2021finsler}), so $\qd=\zero$ if and only if $\Lag = 0$. If $\wt{\h}(\q,\zero)\neq \zero$ at time $t$, by continuity, 
there exists an $\epsilon>0$ such that $\qd\neq\zero$ at time $t+\epsilon$. But that would mean the energy changes which contradicts the fabrics conservation property. Therefore, $\wt{\h}(\q,\zero) = \zero$.
\end{proof}

\begin{remark}
Lemma~\ref{lma:FabricsAreUnbiased} shows that fabrics as defined in Definition~\ref{def:Fabrics} are \emph{unbiased} in the sense that they can influence the system's behavior while in motion, but vanish when the system stops. In other words, a system at rest remains at rest, allowing convergence regions to be entirely governed by the zero sets of other forcing terms (see Definition III.6 in \cite{optimizationFabricsForBehavioralDesignArXiv2020} for a precise description.)
\end{remark}

\begin{definition}[Navigating across fabrics] \label{def:NavigationAcrossFabrics}
Let $\f(\q,\qd)$ be a finite second-order differential equation. $\f$ is called a \emph{navigation policy}
when added to a fabric to form the system
\begin{align} \label{eqn:NavigatingSystem}
    \qdd = \wt{\h}(\q,\qd) + \f(\q,\qd).
\end{align}
We often say $\f$ \emph{navigates} across $\wt{\h}$. When the context is clear, we often refer to it simply as the \emph{policy}.
\end{definition}

We often describe the system in Equation~\ref{eqn:NavigatingSystem} as a \emph{forced} system because of it's relation to forcing policies as defined next.

\begin{definition}[Forcing policies] In many cases, there is a relevant positive-definite system metric $\M(\q,\qd)$ that can be used to shape navigating term (see Equation~\ref{eqn:DecomposedSystemPrelim} for the intuition). In that case, we usually write the system in its force form 
\begin{align}
    \M \qdd + \mxi = \mtau
\end{align}
where $\mxi = -\M\wt{\h}$ and $\mtau$ is an external force. The navigation term is then constructed using a \emph{forcing policy} denoted $\mtau = \pi(\q,\qd)$, matching standard policy notation. Since the metric $\M$ is invertible, there is a one-to-one correspondence between forcing policy and navigation term with $\f = \M^{-1}\mtau$. Again, when the context is clear, we often refer to it simply as the \emph{policy}.
\end{definition}

The following lemma collects together some previously proven results that characterize the energy conservation properties of fabrics.
 
\begin{lemma}[Properties of fabric energies] \label{lma:PropertiesOfFabricEnergies}
Let $\wt{\h}(\q,\qd)$ be a fabric with Finsler energy $\Lag$. 
The Hamiltonian has the property $\Ham_\Lag = \qd^T\partial_\qd\Lag - \Lag = \Lag$ and its time derivative takes the form $\dot{\Ham}_\Lag = \qd^T\big(\M_\Lag\qdd + \mxi_\Lag\big) = 0$ where $\M_\Lag\qdd + \mxi_\Lag = \zero$ are the Euler-Lagrange equations of $\Lag$ with $\M_\Lag = \partial^2_{\qd\qd}\Lag$ and $\mxi_\Lag = \partial_{\qd\q}\Lag\qd - \partial_\q\Lag$. The fabric conserves $\Lag$ so it has the property $\dot{\Ham}_\Lag = \qd^T\big(\M_\Lag\wt{\h} + \mxi_\Lag\big) = 0$. 
\end{lemma}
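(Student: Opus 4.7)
The plan is to verify each of the three equalities in sequence, leveraging the fact that a Finsler energy $\Lag$ is by definition positively homogeneous of degree 2 in $\qd$ (see \cite{ratliff2021finsler}), and that the rest is routine Lagrangian bookkeeping.

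First, to obtain $\Ham_\Lag = \Lag$, I would apply Euler's theorem for homogeneous functions: since $\Lag$ is homogeneous of degree 2 in $\qd$, $\qd^T \partial_\qd \Lag = 2\Lag$, so
\begin{equation*}
\Ham_\Lag = \qd^T\partial_\qd\Lag - \Lag = 2\Lag - \Lag = \Lag.
\end{equation*}

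Second, to derive the form of $\dot{\Ham}_\Lag$, I would differentiate $\Ham_\Lag = \qd^T \partial_\qd\Lag - \Lag$ directly with respect to time, using the chain rule on each term. Writing $\M_\Lag = \partial^2_{\qd\qd}\Lag$ and expanding $\frac{d}{dt}\partial_\qd\Lag = \partial_{\qd\q}\Lag\,\qd + \M_\Lag\,\qdd$ and $\dot{\Lag} = \qd^T\partial_\q\Lag + (\partial_\qd\Lag)^T\qdd$, the $\qdd^T\partial_\qd\Lag$ contributions cancel and what remains, after factoring $\qd^T$ on the left, is exactly $\qd^T\big(\M_\Lag\qdd + \mxi_\Lag\big)$ with $\mxi_\Lag = \partial_{\qd\q}\Lag\,\qd - \partial_\q\Lag$. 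This recovers the Euler-Lagrange operator applied to $\Lag$ projected onto $\qd$, and identifies $\M_\Lag\qdd + \mxi_\Lag = \zero$ as the Euler-Lagrange equations.

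Third, the conservation clause. Since $\wt{\h}$ is a fabric, Definition~\ref{def:Fabrics} gives $\frac{d}{dt}\Lag = 0$ along trajectories, and combined with $\Ham_\Lag = \Lag$ from the first step we get $\dot{\Ham}_\Lag = 0$. Substituting the fabric dynamics $\qdd = \wt{\h}$ into the expression from the second step yields the final identity $\qd^T\big(\M_\Lag\wt{\h} + \mxi_\Lag\big) = 0$.

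The only nontrivial ingredient is the homogeneity property used in step one; everything else is algebraic manipulation, so the main ``obstacle'' is simply keeping the chain-rule expansion clean and recognizing which terms collapse. Because of this, I would state the homogeneity fact explicitly (with a citation to \cite{ratliff2021finsler}) and then present the derivation of $\dot{\Ham}_\Lag$ as a short direct computation rather than appealing to a Legendre-transform argument, which would require additional setup not already present in the paper.
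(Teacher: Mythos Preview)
Your proposal is correct. The paper's own proof does not carry out any of these computations: it simply cites \cite{vanwyk2022geometricfabrics} for the identities $\Ham_\Lag=\Lag$ and $\dot{\Ham}_\Lag=\qd^T(\M_\Lag\qdd+\mxi_\Lag)$, and then observes that the final clause follows from the conservation property in Definition~\ref{def:Fabrics}. Your route differs in that you give a self-contained derivation---Euler's theorem for the HD2 Finsler energy to get $\Ham_\Lag=\Lag$, then a direct chain-rule expansion of $\dot{\Ham}_\Lag$---rather than appealing to an external reference. What the paper's approach buys is brevity and avoids repeating calculations already in the literature; what yours buys is that the reader need not chase the citation, and the homogeneity step is made explicit. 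Substantively the arguments are the same, since the cited derivation in \cite{vanwyk2022geometricfabrics} proceeds exactly as you outline.
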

\begin{proof}
These results are proven in \cite{vanwyk2022geometricfabrics}, with the final fabric property following from conservation of energy.
\end{proof}

We use these properties to prove the following theorem which shows that fabrics are fundamentally stable in the sense that the energy of a forced system is bounded at any given time and can always be dissipated to bring the system to rest.

\begin{theorem}[Fundamental stability of fabrics] \label{thm:FundamentalStabilityOfFabrics}
Let $\wt{\h}(\q,\qd)$ be a fabric with Finsler energy $\Lag$. 
If $\f$ is a finite navigation policy, the corresponding forced system $\qdd=\wt{\h}+\f$ has finite energy after a finite time and will come to rest if the navigating term is set to $\f = \f_\mathrm{damp} = -\M_\Lag^{-1}\B(\q,\qd)\qd$, where $\B(\q,\qd)$ is any positive-definite damping matrix.
\end{theorem}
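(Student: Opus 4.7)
The plan is to derive a single power balance for the forced system and then specialize it to each of the two conclusions. Substituting $\qdd = \wt{\h} + \f$ into the identity $\dot{\Ham}_\Lag = \qd^T(\M_\Lag \qdd + \mxi_\Lag)$ from Lemma~\ref{lma:PropertiesOfFabricEnergies}, and invoking both the fabric conservation identity $\qd^T(\M_\Lag \wt{\h} + \mxi_\Lag) = 0$ and the Finsler equality $\Ham_\Lag = \Lag$, a clean power balance drops out:
\begin{equation}
\dot{\Lag} \;=\; \qd^T \M_\Lag \f.
\end{equation}
This single identity is the workhorse behind both claims.

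For the first claim, I would apply Cauchy--Schwarz in the $\M_\Lag$ inner product to get $|\dot{\Lag}| \leq \|\M_\Lag^{1/2}\qd\|\,\|\M_\Lag^{1/2}\f\| = \sqrt{2\Lag}\,\|\M_\Lag^{1/2}\f\|$. Because $\f$ is finite and $\M_\Lag$ is continuous, the right-hand side is locally bounded in terms of $\sqrt{\Lag}$, giving the scalar differential inequality $\tfrac{d}{dt}\sqrt{\Lag} \leq \tfrac{1}{\sqrt{2}}\|\M_\Lag^{1/2}\f\|$. Integrating from $0$ to any finite $t$ rules out finite-time blow-up, so $\Lag(t) < \infty$ at every finite time.

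For the second claim, I would substitute $\f_\mathrm{damp} = -\M_\Lag^{-1}\B\qd$ into the power balance. The $\M_\Lag$ and $\M_\Lag^{-1}$ cancel, leaving $\dot{\Lag} = -\qd^T \B \qd \leq 0$ by positive-definiteness of $\B$. Since Finsler energies satisfy $\Lag \geq 0$ with equality iff $\qd = \zero$, the energy monotonically decreases to some finite limit $\Lag^\infty \geq 0$, and in particular $\int_0^\infty \qd^T \B \qd\, dt < \infty$. The main obstacle is then bridging from this integrability to $\qd \to \zero$; I would invoke Barbalat's lemma, which needs uniform continuity of $\qd^T \B \qd$ along the trajectory, in turn following from boundedness of $\qdd = \wt{\h} + \f_\mathrm{damp}$ on the bounded-energy sublevel sets (using the first claim applied to $\f_\mathrm{damp}$). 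Once $\qd \to \zero$, Lemma~\ref{lma:FabricsAreUnbiased} automatically gives $\wt{\h}(\q, \zero) = \zero$ and clearly $\f_\mathrm{damp}(\q, \zero) = \zero$, so rest is a genuine fixed point of the forced dynamics and the trajectory remains there.
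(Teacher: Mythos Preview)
Your proposal is correct and follows the same power-balance approach as the paper: derive $\dot{\Lag} = \qd^T\M_\Lag\f$ from Lemma~\ref{lma:PropertiesOfFabricEnergies} and the conservation identity, then specialize to each claim. You are in fact more careful than the paper in two spots: you use a differential inequality for $\sqrt{\Lag}$ to rule out finite-time blow-up (the paper just writes $\Lag_T = \Lag_0 + \int_0^T \qd^T\M_\Lag\f\,dt$ and declares it finite), and you explicitly invoke Barbalat's lemma where the paper simply asserts that $\Lag$ lower-bounded and nonincreasing forces $\dot{\Lag}=-\qd^T\B\qd\to 0$.
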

\begin{proof}
By Lemma~\ref{lma:PropertiesOfFabricEnergies} $\dot{\Lag}[\qdd] = \qd^T\big(\M_\Lag \qdd + \mxi_\Lag\big)$, so for our system we have
\begin{align}
    \dot{\Lag}[\wt{\h} + \f]
    &= \qd^T\big(\M_\Lag (\wt{\h} + \f) + \mxi_\Lag\big) \\
    &= \qd^T\big(\M_\Lag \wt{\h} + \mxi_\Lag\big) + \qd^T\M_\Lag\f \\
    &= \dot{\Lag}[\wt{\h}] + \qd^T\M_\Lag\f \\
    &= \qd^T\M_\Lag\f
\end{align}
since $\dot{\Lag}[\wt{\h}] = 0$. This is the work done by $\f$ on the system. The total work gives the energy after $T$ seconds as
\begin{align}
    \Lag(\q_T,\qd_T) = \Lag(\q_0,\qd_0) + \int_0^T \qd^T\M_\Lag\f dt,
\end{align}
which is finite. 

Choosing $\f = \f_\mathrm{damp} = -\M_\Lag^{-1}\B(\q,\qd)\qd$ after $T$ seconds gives energy change
\begin{align}
    \dot{\Lag}
    &= \qd^T\M_\Lag\big(-\M_\Lag^{-1}\B\qd\big) \\
    &= -\qd^T\B\qd < 0.
\end{align}
Since $\Lag$ is lower bounded, $\dot{\Lag} = -\qd^T\B\qd \to 0$ which means both $\qd\to\zero$ and $\qdd\to\zero$ as $t \to \infty$.
\end{proof}

\begin{remark}
See Corollary~\ref{cor:GeometricFabricStability} for a simplified form for the damper $\f_\mathrm{damp} = -\beta(\q,\qd)\qd$ where $\beta > 0$ is a scalar, which is appealing for preserving the path consistency of geometric fabrics as defined in Proposition~\ref{prop:GeometricFabrics}.
\end{remark}

Given any (finite) autonomous second-order differential equation $\h$, we can always accelerate along the direction of motion strategically to ensure any given Finsler energy is conserved. The following definition characterizes how to do that.

\begin{definition}[Energization] \label{def:Energization}
Let $\qdd = \h(\q,\qd)$ be a finite autonomous second-order differential equation, and let $\Lag$ be an energy. The \emph{energized} system is the transformed system defined as
\begin{align}
    &\qdd = \energize_\Lag\big[\h(\q,\qd)\big] = \h + \alpha \qd \\
    &\ \ \ \ \ 
    \mbox{where}\ \ \alpha 
        = -\frac{\qd^T\big(
                    \M_\Lag\h + \mxi_\Lag
                \big)
            }{\qd^T\M_\Lag\qd}
\end{align}
\end{definition}

This system transformation, which we call \emph{energization}, turns any $\h$ into a fabric by making it conservative. Note that the energy can be any Lagrangian, although it's common for that energy to be more specifically a Finsler energy.

\begin{lemma} \label{lma:EnergizingCreatesFabrics}
Let $\qdd = \h(\q,\qd)$ be a finite autonomous second-order differential equation, and let $\Lag$ be a Finsler energy. The energized system $\qdd = \energize_\Lag\big[\h\big]$ conserves $\Lag$ and is therefore a fabric. We call $\h$ the \emph{generator} of a fabric constructed in this way.
\end{lemma}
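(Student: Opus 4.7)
The plan is to substitute the energized acceleration into the time derivative of the Finsler energy $\Lag$ and verify that the choice of $\alpha$ in Definition~\ref{def:Energization} is exactly what is needed to make the result vanish identically. Lemma~\ref{lma:PropertiesOfFabricEnergies} already gives us the general expression $\dot{\Lag}[\qdd] = \qd^\T(\M_\Lag \qdd + \mxi_\Lag)$, so the question reduces to linear algebra applied to $\qdd = \h + \alpha\qd$.

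Concretely, I would first expand
\begin{align}
\dot{\Lag}\big[\h + \alpha\qd\big]
&= \qd^\T\big(\M_\Lag(\h + \alpha\qd) + \mxi_\Lag\big) \\
&= \qd^\T(\M_\Lag\h + \mxi_\Lag) + \alpha\, \qd^\T\M_\Lag\qd.
\end{align}
Substituting the definition of $\alpha$ cancels the two terms, so $\dot{\Lag}=0$ along trajectories of the energized system, which by Definition~\ref{def:Fabrics} makes it a fabric with conserved energy $\Lag$.

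The main subtlety, and the step I would treat most carefully, is well-posedness of $\alpha$ itself. The denominator $\qd^\T\M_\Lag\qd$ must be nonzero, and the vector $\h+\alpha\qd$ must be finite. For a Finsler energy, $\M_\Lag=\partial^2_{\qd\qd}\Lag$ is positive definite on $\qd\neq\zero$ (indeed $\qd^\T\M_\Lag\qd = 2\Lag > 0$ for $\qd\neq\zero$ by the homogeneity property used in \cite{ratliff2021finsler}), so the quotient is well-defined whenever the system is in motion. At $\qd=\zero$ both the numerator and the quantity $\dot{\Lag}$ vanish identically (since each contains a $\qd^\T$ factor), so $\alpha\qd$ can be extended to $\zero$ continuously and the conservation identity $\dot{\Lag}=0$ holds trivially; the energized system agrees with $\h$ there. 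This handles the only potentially degenerate case.

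The remaining work is essentially bookkeeping: the finiteness of $\h$ assumed in the hypothesis, together with the positive lower bound on $\qd^\T\M_\Lag\qd$ away from $\qd=\zero$, ensures that $\energize_\Lag[\h]$ is a well-defined autonomous second-order system, so the "therefore a fabric" conclusion follows directly from Definition~\ref{def:Fabrics}. I expect the conceptual obstacle is really just making the reader comfortable with the zero-velocity case; the algebra itself is forced by the construction of $\alpha$.
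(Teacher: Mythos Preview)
Your proposal is correct and follows essentially the same approach as the paper: use Lemma~\ref{lma:PropertiesOfFabricEnergies} to write $\dot{\Lag}[\qdd]=\qd^\T(\M_\Lag\qdd+\mxi_\Lag)$, substitute $\qdd=\h+\alpha\qd$, and observe that the defining choice of $\alpha$ cancels the two resulting terms. Your treatment is in fact slightly more careful than the paper's own proof, which does not discuss the $\qd=\zero$ case here at all (that issue is deferred to Section~\ref{sec:NumericalConsiderations}).
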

\begin{proof}
We show that the energized system conserves $\Lag$. By Lemma~\ref{lma:PropertiesOfFabricEnergies} $\dot{\Lag}[\qdd] = \qd^T\big(\M_\Lag \qdd + \mxi_\Lag\big)$, so after energization the time rate of change of $\Lag$ is
\begin{align*}
    \dot{\Lag}[\h + \alpha \qd]
    &= \qd^T\left[
            \M_\Lag \left(
                \h - \frac{
                    \qd^T\big(\M_\Lag\h + \mxi_\Lag\big)
                }{\qd^T\M_\Lag\qd} \qd
            \right) + \mxi_\Lag
        \right] \\
    &= \qd^T\M_\Lag \h 
        - \left(\frac{
                \qd^T\M_\Lag\h + \qd^T\mxi_\Lag
            }{\qd^T\M_\Lag\qd}\right) \qd^T\M_\Lag \qd \\
    &\ \ \ \ \ \ \ \ + \qd^T\mxi_\Lag \\
    &= \qd^T\M_\Lag \h - \qd^T\M_\Lag\h 
        - \qd^T\mxi_\Lag + \qd^T\mxi_\Lag \\
    &= 0.
\end{align*}
\end{proof}

In general, energization may change the behavior of a system since the path traced by a system often changes when the system speeds up or slows down. (E.g. an orbiting satellite will fall to earth if it slows and shoot out to space if it speeds up.) The following proposition characterizes the class of systems whose behavior is unaffected by energization.

\begin{definition}
A system $\qdd = \h(\q,\qd)$ is Homogeneous of Degree 2 (HD2) if $\h(\q,\alpha\qd) = \alpha^2 \h(\q,\qd)$ for $\alpha \geq 0$.
\end{definition}

\begin{remark}
An HD2 system modulates its accelerations in just the right way to maintain its path, independent of speed. If the system were constrained to follow a given path, speeding up by a factor of $\alpha$ would induce accelerations $\alpha^2$ times higher to maintain the path. An HD2 system has this scaling property built in to make its integral curves trace speed invariant paths. This speed invariance is a defining property of geometries \cite{ratliff2021finsler}.
\end{remark}

The next proposition characterizes the class of path consistent fabrics constructed by HD2 generators.

\begin{proposition}[Geometric Fabrics] \label{prop:GeometricFabrics}
Let $\qdd = \h(\q,\qd)$ be an HD2 generator, and let $\Lag(\q,\qd)$ be a Finsler energy. The paths traced by the fabric $\qdd = \wt{\h}(\q,\qd) = \energize_\Lag\big[\h\big]$ match those of $\h$. Moreover, the energized system is also HD2 so $\qdd = \energize_\Lag\big[\h\big] + \gamma \qd$ trace the same paths as its HD2 generator $\h$ for any time varying $\gamma$. Fabrics constructed this way are called \emph{geometric fabrics} and the class of geometric fabrics is the unique class of path consistent fabrics.
\end{proposition}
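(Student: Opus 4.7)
The plan is to break the proposition into three logical pieces: (i) energization preserves the paths traced by an HD2 generator, (ii) the energized system is itself HD2 so that subsequently adding a scalar multiple of $\qd$ does not alter the paths, and (iii) every path consistent fabric must in fact be HD2, and hence equal to the energization of itself, giving uniqueness. Throughout, I will lean on two standard facts about HD2 systems from the referenced Finsler/spray geometry development: a second order ODE generates speed invariant paths if and only if it is HD2, and adding any term of the form $\gamma(\q,\qd)\qd$ to an HD2 system is equivalent to a reparametrization along the integral curves and therefore leaves the collection of paths unchanged.

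For (i), I would simply write the energized fabric as $\wt{\h} = \h + \alpha\qd$ with the scalar $\alpha$ from Definition~\ref{def:Energization}. Since $\h$ is HD2 by hypothesis, the added term $\alpha\qd$ is aligned with the direction of motion, so by the reparametrization fact above, $\wt{\h}$ and $\h$ produce exactly the same geometric curves (only the parametrization along them changes, which is exactly how energization enforces constant $\Lag$).

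For (ii), I would verify directly that $\wt{\h}(\q,\beta\qd) = \beta^2\wt{\h}(\q,\qd)$ for $\beta \ge 0$. The generator $\h$ already scales this way. For the correction $\alpha\qd$, I would use the standard Finsler homogeneity properties: $\M_\Lag$ is homogeneous of degree $0$ in $\qd$ and $\mxi_\Lag$ is homogeneous of degree $2$ in $\qd$ (these follow from $\Lag$ being a Finsler energy, cf.\ Lemma~\ref{lma:PropertiesOfFabricEnergies}). Substituting $\qd \to \beta\qd$ makes the numerator of $\alpha$ scale as $\beta^3$ and the denominator as $\beta^2$, so $\alpha$ scales as $\beta^1$, and consequently $\alpha\qd$ scales as $\beta^2$, as required. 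Given that $\wt{\h}$ is HD2, a further additive term $\gamma(t)\qd$ is again along the direction of motion and preserves paths by the same reparametrization fact, proving the second assertion.

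For (iii), I would argue both inclusions. Any fabric of the form $\energize_\Lag[\h]$ with HD2 generator $\h$ is path consistent by (i)--(ii). Conversely, suppose $\wt{\h}$ is a fabric that is path consistent. Path consistency forces $\wt{\h}$ to be HD2 by the characterization of speed invariance quoted above. Now consider $\energize_\Lag[\wt{\h}] = \wt{\h} + \alpha\qd$: by Lemma~\ref{lma:PropertiesOfFabricEnergies}, the fabric property $\dot{\Ham}_\Lag = \qd^T(\M_\Lag\wt{\h} + \mxi_\Lag) = 0$ gives $\alpha = 0$ identically, so $\wt{\h} = \energize_\Lag[\wt{\h}]$ exhibits $\wt{\h}$ as the energization of its own HD2 generator. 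Thus every path consistent fabric is a geometric fabric in the sense of this proposition. The main obstacle I anticipate is making the "HD2 iff speed invariant paths, and $\gamma\qd$ terms preserve paths" claim airtight without redoing the spray geometry machinery; I would handle this by citing the corresponding results in \cite{ratliff2021finsler} and stating only the scaling computations that are specific to the Finsler energy $\Lag$ used for energization.
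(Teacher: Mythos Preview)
Your proposal is correct and mirrors the paper's proof: both invoke the HD2 path-invariance under tangential accelerations from \cite{ratliff2021finsler} for (i), verify the energized system is HD2 via the homogeneity degrees of $\M_\Lag$ and $\mxi_\Lag$ for (ii), and argue that path consistency forces HD2 for (iii). Your uniqueness step is in fact slightly more explicit than the paper's, since you observe that a fabric has $\alpha = 0$ and hence $\wt{\h} = \energize_\Lag[\wt{\h}]$, whereas the paper only remarks parenthetically that an HD2 fabric ``can be constructed as described above.''
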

\begin{proof}
A property of HD2 systems is that they can accelerate along the direction of motion arbitrarily without changing the system's path \cite{ratliff2021finsler}. The energization transformation is defined as $\energize_\Lag\big[\h\big] = \h + \alpha \qd$ for a particular choice of $\alpha$. Therefore, the energized system is path consistent. Similarly, adding another term $\gamma\qd$ is also an acceleration along the direction of motion, so the paths remain consistent.

Examining the system under the specific energization coefficient, we see
\begin{align}
    \energize_\Lag\big[\h\big]
    &= \h - \frac{\qd^T\big(\M_\Lag \h + \mxi_\Lag\big)}{\qd^T\M_\Lag \qd}\qd \\
    &= \h + \A \big(\M_\Lag \h + \mxi_\Lag\big),
\end{align}
where $\A = -\qd \qd^T / \qd^T\M_\Lag \qd$. $\M_\Lag$ is HD0 (independent of velocity norm), and $\mxi_\Lag$ is HD2 (see \cite{ratliff2021finsler} for a discussion of these properties). $\A$ is also HD0 since $\M_\Lag$ is and there are two factors of $\qd$ in both the numerator and denominator. Therefore, the energized system is HD2 since $\h$ is HD2.

We prove uniqueness by contradiction. Suppose $\wt{\h}$ is a geometric fabrics but is not HD2. (If it is HD2, then it can be constructed as described above.) Then there exists a $(\q,\qd)$ where $\qdd = \wt{\h}(\q, \lambda \qd) \neq \lambda^2 \wt{\h}(\q,\qd)$ for some $\lambda\geq 0$. That means for that state and that $\lambda$, the integral curve starting at $(\q,\lambda\qd)$ will deviate from the integral curve starting at $(\q,\qd)$ after some finite time. Therefore, it can't be geometric which is a contradiction since it's a geometric fabric.
\end{proof}

\begin{remark}
The bent Finsler systems described in \cite{vanwyk2022geometricfabrics}, which can be characterized as generalizations of classical mechanical systems, are geometric fabrics as defined in Proposition~\ref{prop:GeometricFabrics}. The definition here, though, is broader and easier to work with in practice than the earlier definition. In bent Finsler systems, metrics must be defined by Finsler energies, requiring the application of Euler-Lagrange equations which can be computationally complex and challenging to implement. Under our definition here, allows metrics to be arbitrary HD0 positive semi-definite matrices dramatically simplifying design. The Finsler energy is still used for energization of the HD2 geometry generator, but it can remain simply since it needs only define the desired measure of speed, not the metrics. This simpler setup was already used in \cite{xie2022ngf} and is especially helpful where automatic differentiation is involved.
\end{remark}

\begin{corollary} \label{cor:GeometricFabricStability}
A forced geometric fabric of the form $\qdd = \wt{\h} = \energize_\Lag\big[\h\big] - \beta(\q,\qd)\qd$ with positive real valued $\beta > 0$ asymptotically comes to rest without deviating from the paths of the underlying HD2 system $\h$.
\end{corollary}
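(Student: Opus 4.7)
The plan is to obtain the corollary by combining two results already established in the excerpt: Proposition~\ref{prop:GeometricFabrics} for path preservation, and Theorem~\ref{thm:FundamentalStabilityOfFabrics} for convergence to rest. The two claims essentially decouple: path consistency is a statement about how the direction of $\qdd$ relates to $\qd$ at each point, while asymptotic rest is an energy dissipation statement. I will handle them separately.

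First I would address path consistency. By Lemma~\ref{lma:EnergizingCreatesFabrics} and Proposition~\ref{prop:GeometricFabrics}, the energized generator $\energize_\Lag[\h]$ is itself HD2 whenever $\h$ is HD2. The additional term $-\beta(\q,\qd)\qd$ is, at every instant, a scalar multiple of $\qd$, so it is an acceleration purely along the direction of motion. Proposition~\ref{prop:GeometricFabrics} explicitly allows augmenting the energized system by $\gamma\qd$ for any time-varying $\gamma$ without changing the collection of integral paths; choosing $\gamma = -\beta(\q,\qd)$ slots directly into that statement and yields path consistency with $\h$.

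Next I would address convergence. The forced system can be written in the form $\qdd = \wt{\h}_0 + \f$ with $\wt{\h}_0 = \energize_\Lag[\h]$ a fabric (by Lemma~\ref{lma:EnergizingCreatesFabrics}) and navigation policy $\f = -\beta(\q,\qd)\qd$. To invoke Theorem~\ref{thm:FundamentalStabilityOfFabrics} I need to exhibit $\f$ in the form $\f_\mathrm{damp} = -\M_\Lag^{-1}\B(\q,\qd)\qd$ for a positive-definite $\B$. Taking $\B(\q,\qd) = \beta(\q,\qd)\,\M_\Lag(\q,\qd)$ does exactly this: $\beta>0$ and $\M_\Lag$ is positive definite (as the Hessian of a Finsler energy), so their product is positive definite, and the identity $-\M_\Lag^{-1}\B\qd = -\beta\qd = \f$ holds. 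Theorem~\ref{thm:FundamentalStabilityOfFabrics} then delivers $\qd\to\zero$ and $\qdd\to\zero$ asymptotically.

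The only mildly delicate step is the second one, specifically checking that the scalar damper genuinely fits the matrix-damper template of Theorem~\ref{thm:FundamentalStabilityOfFabrics}; once the identification $\B=\beta\M_\Lag$ is in place, everything reduces to citations. I do not anticipate a real obstacle, but it is worth noting in the write-up that this simplified scalar damping is attractive precisely because it is proportional to $\qd$ and therefore a pure reparameterization of motion, which is what lets path consistency survive simultaneously with energy dissipation.
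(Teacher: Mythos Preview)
Your proposal is correct and follows essentially the same route as the paper: identify $\B=\beta\,\M_\Lag$ to invoke Theorem~\ref{thm:FundamentalStabilityOfFabrics} for convergence, and use Proposition~\ref{prop:GeometricFabrics} (energized system is HD2, and $-\beta\qd$ is an acceleration along the direction of motion) for path preservation. The only cosmetic difference is that you treat path consistency first and convergence second, whereas the paper reverses the order.
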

\begin{proof}
By Theorem~\ref{thm:FundamentalStabilityOfFabrics} the fabric will come to rest with $\f = -\M_\Lag^{-1}\B\qd$. Choose $\B = \beta(\q,\qd)\M_\Lag$ with $\beta > 0$. $\B$ is positive definite and $\f = -\M_\Lag^{-1}\beta(\q,\qd)\M_\Lag\qd = \beta \qd$. 
$\h$ is HD2, so $\wt{\h}$ is HD2 and follows the same paths as $\h$. Since $\f$ only accelerates along the direction of motion, by Proposition~\ref{prop:GeometricFabrics} the forced system $\qdd = \wt{\h} - \beta \qd$ maintains the same paths as $\wt{\h}$ and hence $\h$.
\end{proof}

We can think of geometric fabrics as forming a road network of paths through space. Without a navigation policy the system simply follows the nominal paths of the underlying fabric. The navigation policy then operates over the top of that nominal behavior, pushing the system from its current path to neighboring paths as needed. Similar to long-distant travel, when traveling to a distant goal, if the network of roads is well-designed, the navigation policy needs only set the system onto the right road up front, potentially do some minor switching of roads en route, and then, once close, pull the system off the major road networks to converge locally to the goal. Well-designed geometric fabrics can therefore significantly simplify the navigation policy. In that sense, they constitute a well-informed prior on behavior. 

The following Proposition shows that with geometric fabrics we can always view a navigation policy as a combination of zero-work steering (where the path bends but the energy remains constant) and speeding up or slowing down along the direction of motion (more precisely, path invariant energy regulation). 

\begin{proposition} \label{prop:NavigatingFabrics}
Let $\qdd = \energize_\Lag\big[\h\big]$ be a geometric fabric and let $\f$ be a navigation policy. Then the forced equation $\qdd = \energize_\Lag\big[\h\big] + \f$ can be written
\begin{align} \label{eqn:SteeredFabric}
    \qdd &= \energize_\Lag\big[\h+\f\big] + \gamma \qd
\end{align}
for $\gamma = \qd^T\M_\Lag\f/\qd^T\M_\Lag\qd \in \R$. The first term is a fabric which we call the \emph{steered fabric}, and the second term is an \emph{energy regulator}.
We can also write this system as
\begin{align} \label{eqn:RegulatedSteeringAcrossAFabric}
    \qdd = \Big(\energize_\Lag\big[\h\big] + \gamma \qd\Big) 
        + \mP_\perp\f
\end{align}
where $\mP_\perp = \M_\Lag^{-\frac{1}{2}}\big[\I-\wh{\vv}\wh{\vv}^T\big]\M_\Lag^{\frac{1}{2}}$ with $\vv = \M_\Lag^{\frac{1}{2}} \qd$ and $\wh{\vv} = \vv / \|\vv\|$ is a projection matrix. 
\end{proposition}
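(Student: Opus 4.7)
The plan is to prove both decompositions by direct algebraic manipulation, exploiting the linearity of the energization numerator in its argument. The whole proposition really comes down to checking that the energization coefficient splits cleanly when the input is a sum $\h + \f$, and that the projector $\mP_\perp$ is precisely the operator that strips off the $\qd$-component of $\f$ in the $\M_\Lag$-inner product.

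For the first decomposition, I would start by writing out $\energize_\Lag[\h+\f]$ from Definition~\ref{def:Energization}. Because $\M_\Lag\h + \mxi_\Lag$ is linear in the acceleration argument, the coefficient splits as $\alpha' = \alpha - \gamma$, where $\alpha = -\qd^T(\M_\Lag\h+\mxi_\Lag)/(\qd^T\M_\Lag\qd)$ is the energization coefficient for $\h$ alone and $\gamma = \qd^T\M_\Lag\f/(\qd^T\M_\Lag\qd)$ is exactly the scalar named in the statement. Substituting back,
\begin{align*}
\energize_\Lag[\h+\f] &= (\h + \f) + (\alpha - \gamma)\qd \\
&= (\h + \alpha\qd) + \f - \gamma\qd \\
&= \energize_\Lag[\h] + \f - \gamma\qd.
\end{align*}
Rearranging gives Eq.~\eqref{eqn:SteeredFabric}. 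The term $\energize_\Lag[\h+\f]$ is a fabric by Lemma~\ref{lma:EnergizingCreatesFabrics} (applied to the generator $\h+\f$), so the name \emph{steered fabric} is justified.

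For the second decomposition, I would verify the identity $\mP_\perp\f = \f - \gamma\qd$ by direct expansion. Setting $\vv = \M_\Lag^{1/2}\qd$ and $\wh{\vv} = \vv/\|\vv\|$, the rank-one piece satisfies $\wh{\vv}\wh{\vv}^T = \M_\Lag^{1/2}\qd\qd^T\M_\Lag^{1/2}/(\qd^T\M_\Lag\qd)$, so
\begin{align*}
\mP_\perp\f
&= \M_\Lag^{-1/2}\bigl[\I - \wh{\vv}\wh{\vv}^T\bigr]\M_\Lag^{1/2}\f \\
&= \f - \frac{\qd\,\qd^T\M_\Lag\f}{\qd^T\M_\Lag\qd} = \f - \gamma\qd.
\end{align*}
Therefore $\f = \gamma\qd + \mP_\perp\f$, and plugging this into the original forced equation $\qdd = \energize_\Lag[\h] + \f$ immediately yields Eq.~\eqref{eqn:RegulatedSteeringAcrossAFabric}.

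There is no real obstacle here; the result is an identity. The only thing worth being careful about is making sure the coefficient $\gamma$ is well-defined (which it is whenever $\qd \neq \zero$, consistent with the implicit assumption in all of the energization formulas) and recognizing that $\mP_\perp$ is the $\M_\Lag$-orthogonal projector onto the complement of $\mathrm{span}(\qd)$, which is exactly why the tangential part $\gamma\qd$ and the transverse part $\mP_\perp\f$ carry the intuitive meanings of energy regulation and zero-work steering respectively.
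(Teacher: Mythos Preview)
Your proposal is correct and follows essentially the same approach as the paper: direct expansion of $\energize_\Lag[\h+\f]$ to observe the cancellation that yields Eq.~\eqref{eqn:SteeredFabric}, followed by verifying Eq.~\eqref{eqn:RegulatedSteeringAcrossAFabric} by substituting the expressions for $\vv$ and $\wh{\vv}$ into $\mP_\perp$. Your presentation is slightly cleaner in isolating the coefficient splitting $\alpha' = \alpha - \gamma$ up front, and you supply the explicit computation $\mP_\perp\f = \f - \gamma\qd$ that the paper only gestures at, but the underlying argument is identical.
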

\begin{proof}
Equation~\ref{eqn:SteeredFabric} can be proven by expansion:
\begin{align}
    \nonumber
    \qdd
    &= \energize_\Lag\big[\h+\f\big] + \gamma \qd \\
    \nonumber
    &= \h+\f - \left(\frac{\qd^T\big(\M_\Lag(\h+\f)+\mxi_\Lag\big)}{\qd^T\M_\Lag\qd}\right)\qd + \frac{\qd^T\M_\Lag\f}{\qd^T\M_\Lag\qd} \qd \\
    \nonumber
    &= \h - \left(\frac{\qd^T\big(\M_\Lag\h+\mxi_\Lag\big)}{\qd^T\M_\Lag\qd}\right)\qd \\
    \label{eqn:ExpansionIntermediate}
    &\ \ \ \ + \f - \left(\frac{\qd^T\M_\Lag\f}{\qd^T\M_\Lag\qd}\right)\qd + \frac{\qd^T\M_\Lag\f}{\qd^T\M_\Lag\qd} \qd \\
    \nonumber
    &= \energize_\Lag\big[\h\big] + \f.
\end{align}

Equation~\ref{eqn:RegulatedSteeringAcrossAFabric} is just an equivalent algebraic form emphasizing the separation of acceleration along the direction of motion and orthogonal steering. It can be verfied by substituting the expressions for $\vv$ and $\wh{\vv}$. See~\cite{vanwyk2022geometricfabrics} Lemma C.2 Equation 30 for a related derivation.
\end{proof}
The system in Equation~\ref{eqn:SteeredFabric} shows that we can absorb the navigation term into the fabric thereby exposing a separate energy regulation term $\gamma\qd$. With $\gamma = 0$ the system conserves energy while $\f$ acts to steer the fabric, hence the name. With $\gamma < 0$ the system will slow to a stop. The second form given in Equation~\ref{eqn:RegulatedSteeringAcrossAFabric} shows we can also view the energy regulation as applying to the original fabric. Since the fabric is geometric and the energy regulation is an acceleration along the direction of motion, the energy increases or decreases but the path doesn't change. On top of that, the term $\mP_\perp\f$ steers the system without affecting the energy.

This section show that fabrics form a stable medium for system navigation. Navigation across the fabric can be described either as a navigation policy operating directly on the fabric accelerations or, when there is a system metric, as a force policy pushing against the fabric. Geometric fabrics, in particular, can be viewed as a road network of paths the system can travel along without any effort. We can view navigation across the fabric as a combination of energy regulation (injecting or dissipating energy) and energy invariant steering. In the case of geometric fabrics, the energy regulation doesn't change the network of paths.

The following proposition gives insight into the behavior of forced fabrics and helps guide design. It states that if we can get the system to converge with sufficient finite damping, it will converge to the zero set (goal) of the navigation policy. Strategically, we can increase the damping to slow the system as needed to give the navigation policy more influence over the behavior. And if we can prove the navigation policy converges on its own (or equivalently when forcing the Euclidean fabric), then we can construct a modified navigation policy that's guaranteed to converge to the desired goal. 

These results can guide policy design, although it's far from a complete characterization of convergence or stable policies. In many cases, we might learn a policy over a given fabric, for instance using RL. Convergence and stability are more complex in this setting, but fabrics make it easier to safely explore and find performance stable and convergent solutions.

\begin{proposition}[Convergence] \label{prop:ForcedSystemConvergence}
Let $\wt{\h}$ be a fabric and let $\f$ be a bounded navigation policy with zero set $\mathcal{S} = \{\q\:|\:\f(\q,\zero) = \zero\}$. Let $\qdd = \wt{\h}+\f = \wt{\h}_\f$ denote the forced system. Then if $\wt{\h}_\f$ converges, it converges to $\mathcal{S}$.
\end{proposition}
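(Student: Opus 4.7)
The plan is to pass to the limit in the forced equation $\qdd = \wt{\h}+\f$ and use Lemma~\ref{lma:FabricsAreUnbiased} (fabrics are unbiased) to peel off the fabric contribution, leaving the forcing policy pinned to zero at the limit point.

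First I would fix what ``converges'' means here. Since $\wt{\h}_\f$ is a second-order autonomous system, convergence of a trajectory is naturally interpreted as the state $(\q,\qd)$ approaching an equilibrium $(\q_\infty,\zero)$ of the first-order lift as $t\to\infty$; in particular $\qd(t)\to\zero$. I would state this explicitly at the start of the proof so the argument is unambiguous. Under standard regularity of $\wt{\h}$ and $\f$ (continuity in both arguments, which is implicit throughout since $\f$ is assumed finite and $\wt{\h}$ arises from a continuous Finsler energy), continuity then forces $\qdd(t)=\wt{\h}(\q,\qd)+\f(\q,\qd)\to \wt{\h}(\q_\infty,\zero)+\f(\q_\infty,\zero)$.

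Next I would invoke Lemma~\ref{lma:FabricsAreUnbiased} to conclude $\wt{\h}(\q_\infty,\zero)=\zero$. Because the trajectory approaches an equilibrium, the limiting acceleration must also vanish: indeed, if the state settles at $(\q_\infty,\zero)$ then $\qdd$ cannot remain bounded away from zero (otherwise $\qd$ could not converge to $\zero$). Combining this with the limit above yields
\begin{equation*}
    \zero \;=\; \wt{\h}(\q_\infty,\zero) + \f(\q_\infty,\zero) \;=\; \zero + \f(\q_\infty,\zero),
\end{equation*}
so $\f(\q_\infty,\zero)=\zero$, i.e.\ $\q_\infty\in\mathcal{S}$.

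The only subtlety, and the step I expect to occupy the most thought, is the argument that $\qdd(t)\to\zero$ along a convergent trajectory. Strictly speaking one needs either (i) to build this into the definition of convergence (equilibrium of the lifted first-order system), or (ii) to argue via a uniform continuity / Barbalat-style observation that a convergent velocity signal with uniformly continuous derivative must have $\qdd\to\zero$. The cleanest route in this paper's context is (i), since the boundedness of $\f$ together with Theorem~\ref{thm:FundamentalStabilityOfFabrics} already guarantees the needed regularity of the flow and makes the equilibrium interpretation natural. The rest of the proof is then a short one-line limit argument invoking the unbiasedness of fabrics, and the conclusion $\q_\infty\in\mathcal{S}$ is immediate.
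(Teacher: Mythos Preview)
Your proposal is correct and follows essentially the same approach as the paper: invoke Lemma~\ref{lma:FabricsAreUnbiased} to kill the fabric term at rest, then read off $\f(\q^*,\zero)=\zero$ from $\qdd=\zero$ at the limit. The paper's proof is a terse one-liner that takes the equilibrium interpretation (your option (i)) for granted, whereas you have been more careful in spelling out why $\qdd\to\zero$; that extra discussion is sound but not a different method.
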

\begin{proof}
By Lemma~\ref{lma:FabricsAreUnbiased}, $\wt{\h}(\q,\zero) = \zero$ for all $\q$. Therefore, at convergence, $\qdd = \wt{\h}(\q^*,\zero) + \f(\q^*,\zero) = \f(\q^*,\zero) = \zero$, which implies $\q^*\in\mathcal{S}$.
\end{proof}

Proposition~\ref{prop:ForcedSystemConvergence} shows that training navigation policies can be a powerful design choice. If we enforce through structural choices the desired zero set of the navigation policy and train the policy to successfully converge, then we're guaranteed that it converges to the correct goal.

When $\f$ does not necessarily converge on its own (e.g. it may require additional damping), Theorem~\ref{thm:main} gives an explicit class of energy regulators that will guarantee convergence to $\mathcal{S}$ in the case where there exists a \emph{compatible} potential.
\section{Energy Regulation} 
\label{sec:EnergyRegulation}

This next proposition characterizes how to regulate energy within a given range $[0,\Lag_{\max}]$ using an \emph{energy regularizer} while using a navigation policy $\f$ to both modulate system energy and steer. When driven by $\f$, the system increases energy (speeds up) to a maximum energy level then maintains that energy as long as $\f$ is pushing the system forward. If the system is moving against $\f$ it removes energy (slows down). Examples of when this second case may occur are (1) the system is moving the wrong way, e.g. away from a goal; (2) the system is approaching a goal and $\f$ includes sufficient damping to bring it to rest at the goal. In both cases, the energy regularization is removed and $\f$ acts to slow the system.

\begin{proposition}[Energy Capping 1] \label{prop:EnergyCapping1}
Let $\wt{\h}$ be a fabric with Finsler energy $\Lag$ and let $\f$ be a navigation policy. Design a regularized system of the form
\begin{align}
    \qdd = \wt{\h} + \f - \lambda \M_\Lag^{-1}\B\qd,
\end{align}
where $\M_\Lag$ is the energy tensor of $\Lag$ and $\B(\q,\qd)$ is positive definite, and choose
\begin{align} \label{eqn:LambdaDesign}
    \lambda = \max\left\{
        0, \frac{\qd^T\M_\Lag\f}{\qd^T\B\qd+\gamma(\Lag)}
    \right\},
\end{align}
where $\gamma(0) = \gamma_{\max}$, $\gamma(\Lag_{\max}) = 0$, and $\Lag_{\max}$ is a desired energy cap. Then the regularized system has the following energy properties:
\begin{enumerate}
    \item Bounded energy: $\Lag\in[0,\Lag_{\max}]$.
    \item Energy increases when moving with $\f$: When $\qd^T\M_\Lag\f \geq 0$, we have $\dot{\Lag}\geq 0$ with equality only when either $\Lag=\Lag_{\max}$ or $\qd^T\M_\Lag\f = 0$.
    \item Energy decreases when moving against $\f$: When $\qd^T\M_\Lag\f < 0$, we have $\dot{\Lag} < 0$.
    \item Energy rates of change are instantaneously the same with and without the fabric, and the regularizing damper only decreases energy: $\dot{\Lag}[\wt{\h} + \f - \lambda \M_\Lag^{-1}\B\qd] = \dot{\Lag}[\f - \lambda \M_\Lag^{-1}\B\qd] \leq \dot{\Lag}[\f] = \dot{\Lag}[\wt{\h} + \f]$.
\end{enumerate}
\end{proposition}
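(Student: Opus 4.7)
The plan is to obtain one clean formula for $\dot{\Lag}$ along the regularized trajectory and then read off all four properties by a case split on the sign of $\qd^\T\M_\Lag\f$. First I would apply Lemma~\ref{lma:PropertiesOfFabricEnergies} with $\qdd = \wt{\h}+\f-\lambda\M_\Lag^{-1}\B\qd$ and exploit fabric conservation $\qd^\T(\M_\Lag\wt{\h}+\mxi_\Lag) = 0$ to cancel the fabric contribution, obtaining
\begin{align*}
\dot{\Lag} \;=\; \qd^\T\M_\Lag\f \;-\; \lambda\,\qd^\T\B\qd.
\end{align*}
This single identity already gives property~4: the cancellation of the fabric term is exactly the ``energy rates are the same with and without the fabric'' statement, and $\lambda\,\qd^\T\B\qd \geq 0$ (since $\B\succ 0$ and $\lambda\geq 0$) is the ``damper only decreases energy'' statement, with $\dot{\Lag}[\cdot]$ read as in the proof of Theorem~\ref{thm:FundamentalStabilityOfFabrics}.

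Next I would substitute the clipped $\lambda$ from Equation~\eqref{eqn:LambdaDesign} and split on the sign of $\qd^\T\M_\Lag\f$. When $\qd^\T\M_\Lag\f < 0$, the $\max$ pins $\lambda=0$, so $\dot{\Lag} = \qd^\T\M_\Lag\f < 0$, which is property~3. When $\qd^\T\M_\Lag\f \geq 0$, plugging in the explicit fraction and simplifying yields
\begin{align*}
\dot{\Lag} \;=\; \qd^\T\M_\Lag\f\,\Bigl(1 - \tfrac{\qd^\T\B\qd}{\qd^\T\B\qd + \gamma(\Lag)}\Bigr) \;=\; \tfrac{\gamma(\Lag)\,\qd^\T\M_\Lag\f}{\qd^\T\B\qd + \gamma(\Lag)},
\end{align*}
which is non-negative on $\Lag\in[0,\Lag_{\max}]$ (where $\gamma\geq 0$) and vanishes iff $\gamma(\Lag)=0$, i.e.\ $\Lag=\Lag_{\max}$, or $\qd^\T\M_\Lag\f = 0$. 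That is property~2. Property~1 then falls out by a standard invariance argument: $\Lag\geq 0$ is automatic because a Finsler energy is non-negative in $\qd$, and at the upper boundary $\Lag=\Lag_{\max}$ we get $\dot{\Lag}=0$ in the driving case and $\dot{\Lag}<0$ in the braking case, so the sub-level set $\{\Lag\leq\Lag_{\max}\}$ is forward-invariant once the trajectory enters it.

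The main obstacle I anticipate is handling the potentially ill-defined denominator $\qd^\T\B\qd + \gamma(\Lag)$: it could only vanish when $\qd=\zero$ and simultaneously $\gamma(\Lag)=0$, but $\qd=\zero$ forces $\Lag=\tfrac{1}{2}\qd^\T\M_\Lag\qd = 0 \neq \Lag_{\max}$, so that combination is vacuous and $\lambda$ is well-defined throughout the relevant state space (with the mild tacit assumption that $\gamma$ stays non-negative between its two specified endpoints). The remaining subtlety is pinning down the notation $\dot{\Lag}[\cdot]$ in property~4 so that each of the four pairwise equalities/inequalities reduces to either (i)~dropping the fabric's conservative contribution to the energy rate or (ii)~noting that $-\lambda\,\qd^\T\B\qd\leq 0$; the algebra then follows directly from the opening identity.
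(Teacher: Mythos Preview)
Your proposal is correct and follows essentially the same approach as the paper: compute $\dot{\Lag} = \qd^\T\M_\Lag\f - \lambda\,\qd^\T\B\qd$ via the fabric's conservation property, then case-split on the sign of $\qd^\T\M_\Lag\f$ and read off each property. The only cosmetic differences are that you handle property~4 up front (the paper does it last) and you simplify the positive-case expression one step further to $\gamma(\Lag)\,\qd^\T\M_\Lag\f/(\qd^\T\B\qd+\gamma(\Lag))$, which actually makes the equality cases in property~2 more transparent than the paper's version.
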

\begin{proof}[\hspace{-23pt} Proof]
The energy derivative is
\begin{align}
    \nonumber
    \dot{\Lag} &= \qd^T\big[\M_\Lag \qdd + \mxi_\Lag\big] \\
    \nonumber
    &= \qd^T\Big[\M_\Lag \big(\wt{\h}+\f - \lambda \M_\Lag^{-1}\B\qd\big) + \mxi_\Lag\Big] \\
    \label{eqn:hDroppingOut}
    &= \qd^T\big[\M_\Lag \wt{\h} + \mxi_\Lag\big] + \qd^T\M_\Lag \big(\f - \lambda \M_\Lag^{-1}\B\qd\big)\\
    \nonumber
    &= \qd^T\M_\Lag \f - \lambda \qd^T\B\qd,
\end{align}
since $\qd^T\big[\M_\Lag \wt{\h} + \mxi_\Lag\big] = \dot{\Lag}[\wt{\h}] = 0$ by the conservation property of $\wt{\h}$.
Choosing $\lambda$ per Equation~\ref{eqn:LambdaDesign}, we have two case. If $\qd^T\M\f < 0$, then $\lambda = 0$ and
\begin{align}
    \dot{\Lag} = \qd^T\M_\Lag \f < 0.
\end{align}
This case proves property 3.

The second case is, if $\qd^T\M_\Lag\f \geq 0$, then
\begin{align}
    \lambda = \frac{\qd^T\M_\Lag\f}{\qd^T\B\qd+\gamma(\Lag)}
\end{align}
and
\begin{align}
    \dot{\Lag} &= \qd^T\M_\Lag \f - \left(\frac{\qd^T\M_\Lag\f}{\qd^T\B\qd+\gamma(\Lag)}\right) \qd^T\B\qd \\
    \label{eqn:EnergyRateOfChangeWithf}
    &= \qd^T\M_\Lag \f \left[1 - \frac{\qd^T\B\qd}{\qd^T\B\qd+\gamma(\Lag)}\right].
\end{align}
We can make two observations:
\begin{enumerate}
\item When $\gamma = 0$, $\Lag=\Lag_{\max}$ and $\qd\neq\zero$, so $\dot{\Lag}=0$.
\item When $\gamma=\gamma_{\max}$, $\Lag=0$ and $\qd=\zero$, so
    \begin{align}
        \frac{\qd^T\B\qd}{\qd^T\B\qd+\gamma(\Lag)} = 0
    \end{align}
    so $\dot{\Lag}=\qd^T\M\f \geq 0$. 
\end{enumerate}

To prove property 2, we note $\dot{\Lag} > 0$ only when $\gamma > 0$ and $\qd^T\M_\Lag\f > 0$. And $\dot{\Lag} = 0$ when either factor in Equation~\ref{eqn:EnergyRateOfChangeWithf} is zero, which means either $\qd^T\M_\Lag\f=0$ or $\qd=\zero$. The latter condition implies $\gamma=0$ and $\Lag=\Lag_{\max}$.

Property 1 follows by noting that $\dot{\Lag} = 0$ at $\Lag=\Lag_{\max}$ so $\Lag>\Lag_{\max}$ would be a contradiction.

Finally, property 4 derives from the simple observation that the contribution from $\wt{\h}$ to $\dot{\Lag}$ drops out in line~\ref{eqn:hDroppingOut} because $\wt{\h}$ is conservative. And $\lambda \geq 0$ only removes energy with its contribution being $-\lambda \qd^T\B \qd\leq 0$ since $\B$ is positive definite. 

\end{proof}

\begin{remark}
The use of $\gamma$ in the denominator of Equation~\ref{eqn:LambdaDesign} makes it robust at $\qd = \zero$. The specific profile of $\gamma$ defines how $\lambda$ moves between $\lambda_0 = \qd^T\M_\Lag\qd/\qd^T\B\qd$ (to fully cap the energy with $\dot{\Lag} = 0$ when $\Lag = \Lag_{\max}$) and $0$ when $\Lag = 0$ (equiv. $\qd = \zero$).
\end{remark}

\begin{proposition}[Energy Capping 2] \label{prop:EnergyCapping2}
Let $\wt{\h}$ be a fabric with energy $\Lag$ and let $\f$ be a navigation policy. Design a regularized system of the form
\begin{align}
    \label{eq:general_fabric}
    \qdd = \wt{\h}+ \f + \lambda \qd - \beta \qd,
\end{align}
and choose
\begin{align}
    \lambda(\alpha_f) = \gamma(\Lag) \alpha_f
\end{align}
where $\beta \in [0, \beta_{max}]$, $\gamma(\Lag) \in [0, 1]$, $\gamma(0) = 0$, $\gamma(\Lag_{max}) = 1$, and
\begin{align}
\alpha_f = -\frac{\qd^T \M_\Lag \f}{\qd^T \M_\Lag \qd}.
\end{align}
Such a system will have bounded energy $\Lag\in[0,\Lag_{\max}]$ for all time.
\end{proposition}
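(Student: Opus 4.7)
The plan is to mimic the calculation already used in Proposition~\ref{prop:EnergyCapping1}: differentiate $\Lag$ along the regulated flow, collapse the fabric contribution using the conservation identity from Lemma~\ref{lma:PropertiesOfFabricEnergies}, and then check the sign of $\dot{\Lag}$ at the two boundary levels of the interval $[0,\Lag_{\max}]$.

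First, applying $\dot{\Lag}[\qdd] = \qd^\T(\M_\Lag\qdd + \mxi_\Lag)$ to the system in Equation~\eqref{eq:general_fabric} and using $\qd^\T(\M_\Lag\wt{\h}+\mxi_\Lag)=0$ (the definition of a fabric), I would obtain
\begin{align}
\dot{\Lag} \;=\; \qd^\T\M_\Lag\f + (\lambda-\beta)\,\qd^\T\M_\Lag\qd.
\end{align}
Next, the definition $\alpha_f = -\qd^\T\M_\Lag\f/\qd^\T\M_\Lag\qd$ lets me rewrite $\qd^\T\M_\Lag\f = -\alpha_f\,\qd^\T\M_\Lag\qd$, and substituting the prescribed $\lambda = \gamma(\Lag)\,\alpha_f$ collapses everything to
\begin{align}
\dot{\Lag} \;=\; \bigl[(\gamma(\Lag)-1)\,\alpha_f \;-\; \beta\bigr]\,\qd^\T\M_\Lag\qd.
\end{align}

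From this closed form I would read off the two bounds. The lower bound $\Lag\geq 0$ is automatic: $\Lag = \tfrac{1}{2}\qd^\T\M_\Lag\qd$ is a Finsler energy (same justification as in Lemma~\ref{lma:FabricsAreUnbiased}), so it is nonnegative by construction and the flow of $\dot{\Lag}$ cannot drive a nonnegative quantity below zero without first passing through $\qd=\zero$, where $\dot{\Lag}=0$. For the upper bound I would evaluate at $\Lag = \Lag_{\max}$: by hypothesis $\gamma(\Lag_{\max})=1$, so the $\alpha_f$ term vanishes and
\begin{align}
\dot{\Lag}\big|_{\Lag = \Lag_{\max}} \;=\; -\beta\,\qd^\T\M_\Lag\qd \;\leq\; 0,
\end{align}
since $\beta\geq 0$ and $\M_\Lag \succeq 0$. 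A continuity argument then shows $\Lag$ cannot cross $\Lag_{\max}$: if $\Lag(t_0) \leq \Lag_{\max}$ and $\Lag$ ever strictly exceeded $\Lag_{\max}$, there would be a first crossing time where $\Lag = \Lag_{\max}$ with $\dot{\Lag} > 0$, contradicting the inequality above.

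The main obstacle is not really computational (the algebra is short and parallels the proof of Proposition~\ref{prop:EnergyCapping1}) but conceptual in handling the boundary carefully: one must confirm that at the upper cap it is the exact choice $\gamma(\Lag_{\max}) = 1$ that cancels the energy-injecting $\alpha_f$ contribution regardless of the sign of $\qd^\T\M_\Lag\f$, and then invoke the continuity/no-first-crossing argument rather than attempting a global Lyapunov-style bound. The lower bound requires only noting that $\Lag$ is a Finsler energy and cannot become negative, so no further work is needed there.
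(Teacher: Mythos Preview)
Your proposal is correct and follows essentially the same approach as the paper: compute $\dot{\Lag}$ along the regulated flow, use the fabric conservation identity to drop the $\wt{\h}$ contribution, and then observe that at $\Lag=\Lag_{\max}$ the condition $\gamma(\Lag_{\max})=1$ kills the $\alpha_f$ term, leaving $\dot{\Lag}=-\beta\,\qd^\T\M_\Lag\qd\leq 0$. Your closed form $\dot{\Lag}=\bigl[(\gamma(\Lag)-1)\alpha_f-\beta\bigr]\qd^\T\M_\Lag\qd$ and the explicit no-first-crossing argument are a bit tidier than the paper's more informal phrasing, but the content is the same.
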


\begin{proof}[\hspace{-23pt} Proof]
The energy time derivative is
\begin{align}
    \dot{\Lag} &= \qd^T ( \M_\Lag \qdd + \mxi_\Lag ) \\
    &= \qd^T ( \M_\Lag (\wt{\h} + \f + \lambda \qd  - \beta \qd) + \mxi_\Lag ) \\
    &= -\beta \qd^T \M_\Lag \qd + \qd^T ( \M_\Lag (\f + \lambda \qd ))
\end{align}
since $\qd^T\big[\M_\Lag \wt{\h} + \mxi_\Lag\big] = \dot{\Lag}[\wt{\h}] = 0$ by the conservation property of $\wt{\h}$. In general, $\qd^T \M_\Lag \f$ can perform work on the system, changing its energy levels. However, system energy will ultimately be bounded given that $\gamma$ can become equal to 1 arbitrarily, and certainly $\gamma(\Lag_{max}) = 1$ by design. Whenever $\gamma = 1$, the energy time derivative becomes
\begin{align}
    \dot{\Lag}
        &= -\beta \qd^T \M_\Lag \qd + \qd^T \left( \M_\Lag \left(\f -\frac{\qd^T \M_\Lag \f}{\qd^T \M_\Lag \qd} \qd \right) \right) \\
        &= -\beta \qd^T \M_\Lag \qd
\end{align}
If $\beta = 0$, then system energy is conserved, and if $\beta > 0$, then energy is dissipated. In essence, $\gamma$ can monitor the system energy and decide how much work can be done by $\qd^T \M_\Lag \f$, which results in shifting energy levels that are ultimately bounded by $\Lag_{max}$.
\end{proof}
Within the preset boundary conditions, $\gamma$ can behave arbitrarily, fluctuating the system energy. $\gamma$ can therefore be learned from experience, enabling it to modulate system energy advantageously. In parallel, $\beta$ can also be learned, promoting dynamic braking. Note, if $\beta > 0$ and $\gamma = 1$ persists, then system energy will decrease resulting in $\|\qd\|, \|\qdd\| \to 0$ as $t \to \infty$. Note, this does not imply that $\|\f\| \to 0$ as well, but rather, the system can controllably come to rest regardless of $\|\f\|$. Finally, robustness to numerical issues when leveraging this design for $\lambda$ when $\|\qd\| \to 0$ can be obtained via the strategies in Section \ref{sec:NumericalConsiderations}.

To effectively regulate the energy of a navigation fabric to \emph{guarantee} convergence to the navigation policy's zero set, we need a measure of progress toward that zero set. That measure of progress can be given by a potential function that's compatible with the navigation policy in the sense that it's negative gradient generally points in the same direction as the policy's vector field and is (locally) minimized at the policy's zero set.

\begin{definition}[Compatible potential]
Let $\f(\q,\qd)$ be a navigation policy. We say a potential function is \emph{compatible} with $\f$ if $\partial\psi(\q) = \zero$ if and only if  $\f(\q, \zero) = \zero$ and $-\partial\psi^T\f(\q,\zero) > 0$ wherever $\f(\q,\zero) \neq \zero$ (equiv. $\partial\psi \neq \zero$).
\end{definition}

The next theorem prescribes how to regulate the energy of a navigation fabric given a compatible potential.

\begin{theorem} \label{thm:main}
Let $\energize_\Lag\big[\h(\q,\qd)\big]$ be a fabric with generator $\h$ and Finsler energy $\Lag$, and let $\f(\q,\qd)$ be a navigation policy with compatible potential $\psi(\q)$. Denote the total energy by $\Ham = \Lag + \psi$.
The system $\qdd = \mathrm{energize}_{\Ham}\big[\h + \f\big] + \gamma \qd$
with energy regulator
\begin{align}
    &\gamma(\q,\qd) = -\left(\frac{\qd\:\qd^T}{\qd^T\M_\Lag\qd}\right)\partial\psi - \beta \qd
\end{align}
converges to the zero set of $\f$ for $\beta > 0$.
\end{theorem}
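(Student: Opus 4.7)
My plan is a Lyapunov--LaSalle argument centered on the Finsler energy $\Lag$ itself. The key design insight I would exploit is that energization with respect to $\Ham$ and the first term of the regulator $\gamma$ are precisely coordinated to cancel $\dot\psi$ in the rate of change of $\Lag$, leaving only a pure damping term proportional to $-\beta\qd^T\M_\Lag\qd$.

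First, I would differentiate $\Lag$ along the closed-loop trajectory using the identity $\dot\Lag = \qd^T(\M_\Lag\qdd + \mxi_\Lag)$ from Lemma~\ref{lma:PropertiesOfFabricEnergies}. By the definition of $\energize_\Ham$ with $\M_\Ham = \M_\Lag$ and $\mxi_\Ham = \mxi_\Lag - \partial\psi$ (since $\psi$ is velocity-independent), the energization is constructed to annihilate $\qd^T(\M_\Lag\qdd + \mxi_\Lag - \partial\psi)$, so along the energized portion alone $\qd^T(\M_\Lag\energize_\Ham[\h+\f] + \mxi_\Lag) = \qd^T\partial\psi = \dot\psi$. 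The regulator then contributes the additional quantity $\qd^T\M_\Lag(\gamma\qd)$, which simplifies by direct substitution to $-\qd^T\partial\psi - \beta\qd^T\M_\Lag\qd = -\dot\psi - \beta\qd^T\M_\Lag\qd$. Combining the two contributions yields $\dot\Lag = \dot\psi - \dot\psi - \beta\qd^T\M_\Lag\qd = -\beta\qd^T\M_\Lag\qd \leq 0$, so $\Lag$ serves as a Lyapunov function.

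Next I would invoke LaSalle's invariance principle. Since $\Lag$ is non-negative and monotonically non-increasing, velocities remain bounded; assuming the trajectory stays in a compact subset of state space (a standard coercivity assumption), the $\omega$-limit set lies in the largest forward-invariant subset of $\{\dot\Lag = 0\} = \{\qd = \zero\}$. On that invariant set $\qdd \equiv \zero$ as well. At $\qd = \zero$, Lemma~\ref{lma:FabricsAreUnbiased} together with the fact that $\energize_\Lag[\h]$ is a fabric forces $\h(\q,\zero) = \zero$; both the energization correction $\alpha\qd$ and the regulator $\gamma\qd$ vanish on this set. Hence $\qdd = \f(\q,\zero)$ in the limit, and the invariance requirement $\qdd = \zero$ forces $\f(\q,\zero) = \zero$, i.e.\ $\q \in \mathcal{S}$. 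Compatibility of $\psi$ with $\f$ ensures that $\partial\psi$ also vanishes on $\mathcal{S}$, so the regulator is consistently zero there and the system truly rests.

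The main obstacle I expect is in the first step: the quantity annihilated by $\energize_\Ham$ is the formal Euler--Lagrange expression $\qd^T(\M_\Ham\qdd + \mxi_\Ham)$, which is not literally $\dot\Ham$ because $\Ham = \Lag + \psi$ is not itself a Finsler energy. Tracking signs carefully through this formal manipulation is essential so that the $+\dot\psi$ from the energized part cleanly cancels the $-\dot\psi$ coming from the regulator's first term. A secondary concern is the $\qd \to \zero$ behavior of the directionally dependent factor $\qd\qd^T/(\qd^T\M_\Lag\qd)$ in the regulator; the invariant-set characterization at $\qd = \zero$ must be handled carefully, either via directional limits or by appealing to the numerical regularizations discussed elsewhere in the paper.
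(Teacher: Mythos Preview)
Your Lyapunov step has a sign error that undermines the choice of $\Lag$ as the Lyapunov function. In the paper, $\energize_\Ham$ is constructed so that the undamped system conserves the \emph{total energy} $\Ham=\Lag+\psi$ directly: the coefficient is $\alpha=-\tfrac{1}{Z}\qd^T\big(\M_\Lag(\h+\f)+\mxi_\Lag+\partial\psi\big)$ with $+\partial\psi$, giving $\dot\Lag=-\dot\psi$ along the energized part, not $+\dot\psi$. Your formula $\mxi_\Ham=\mxi_\Lag-\partial\psi$ would instead conserve the Hamiltonian of $\Ham$ viewed as a Lagrangian, namely $\Lag-\psi$, and produces a different closed-loop system. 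For the paper's system $\dot\Lag$ is not sign-definite; the correct Lyapunov function is $\Ham$, for which the paper obtains $\dot\Ham=-\beta\,\qd^T\M_\Lag\qd$.

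The more serious gap is your claim that the energization correction $\alpha\qd$ and the regulator term vanish as $\qd\to\zero$. The coefficient $\alpha$ contains $-\tfrac{1}{Z}\qd^T\M_\Lag\f$, and since $\f(\q,\zero)$ is precisely the quantity you are trying to prove is zero, you cannot assume it away: the ratio $\qd\qd^T/(\qd^T\M_\Lag\qd)$ does not tend to zero but limits to a rank-one projector $\A=\vv\vv^T/(\vv^T\M_\Lag\vv)$ along the limiting direction $\vv$. The paper's proof hinges on this directional limit, obtaining $\zero=[\I-\A\M_\Lag]\f+\A(-\partial\psi)$, and then uses the compatibility of $\psi$ \emph{essentially}: the two summands live in complementary subspaces, so each must vanish; if $\f\neq\zero$ one deduces $\f\in\mathrm{span}(\vv)$ and $\partial\psi\perp\vv$, contradicting $-\partial\psi^T\f>0$. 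You relegate this to a ``secondary concern,'' but it is the core of the argument---without it the implication $\qdd\to\zero\Rightarrow\f(\q,\zero)=\zero$ simply does not hold.
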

\begin{proof}[Proof of Theorem~\ref{thm:main}]
The total energy $\Ham$ of the energized system $\mathrm{energize}_\Ham[\h+\f]$ is conserved by definition, and we will show that with damping it's minimized and the system comes to rest. We then show that at convergence the compatibility conditions between potential $\psi$ and perturbation field $\f$ ensure that at convergence $\f=\zero$.
 
The time derivative of the total energy $\Ham = \Ham_\Lag + \psi$ is:
\begin{align} \label{eq:HamDeriv}
    \dot{\Ham} = \qd^T\Big[\M_\Lag\qdd + \mxi_\Lag + \partial\psi\Big],
\end{align}
where $\M_\Lag \qdd + \mxi_\Lag = \zero$ are the equations of motion of $\Lag$ defined by the Euler-Lagrange equation (see \cite{vanwyk2022geometricfabrics} for a derivation). We assume $\M_\Lag$ is bounded in a finite region and strictly positive definite everywhere; in particular, it doesn't vanish or reduce rank as $\qd\rightarrow \zero$. To derive energization, we take the system
\begin{align}\label{eq:EnergizedSystem}
\qdd = \h + \f + \alpha \qd
\end{align}
and solve for the $\alpha$ which makes $\dot{\Ham} = 0$ (i.e. calculate the acceleration along the direction of motion needed to conserve energy). Plugging Eq.~\ref{eq:EnergizedSystem} into Eq.~\ref{eq:HamDeriv}, setting to zero, and solving for $\alpha$ gives:
\begin{align}
    &\qd^T\Big[\M_\Lag\big(\h + \f + \alpha \qd\big) + \mxi_\Lag + \partial\psi\Big] = \zero \\
    \nonumber
    &\Rightarrow \alpha(\qd^T\M_\Lag \qd) 
        + \qd^T(\M_\Lag\h + \mxi_\Lag) 
        + \qd^T(\M_\Lag\f+\partial\psi) = \zero \\
    \label{eq:SeparableEnergizationAlpha}
    &\Rightarrow \alpha = -\frac{\qd^T(\M_\Lag\h + \mxi_\Lag)}{Z}
        - \frac{\qd^T(\M_\Lag\f+\partial\psi)}{Z} \\
    \label{eq:EnergizationAlpha}
    &\Rightarrow \alpha = -\frac{1}{Z}\qd^T\big(\M_\Lag(\h+\f) + \mxi_\Lag + \partial\psi\big)
\end{align}
where $Z = \qd^T\M_\Lag \qd$. Equation~\ref{eq:EnergizationAlpha} is the form given in Definition~\ref{def:Energization}.

The $\alpha$ of Equation~\ref{eq:EnergizationAlpha} by definition makes the undamped equations in \ref{eq:EnergizedSystem} conserve the Hamiltonian $\Ham$, therefore the damped equations  
\begin{align} \label{eq:DampedSystem}
    \qdd = \h + \f + \alpha \qd - \beta\qd
\end{align} 
for $\beta>0$ decreases energy at a rate
\begin{align}
    \dot{\Ham}
    &= \qd^T\Big[\M_\Lag\big(\h + \f + \alpha \qd\big) + \mxi_\Lag + \partial\psi\Big]
    - \beta\qd^T\M_\Lag\qd \\ \label{eq:HamDecrease}
    &= - \beta\qd^T\M_\Lag\qd.
\end{align}
Since $\M_\Lag$ is strictly positive definite, this final expression is less than $0$ for all $\qd\neq\zero$ and 0 for $\qd = \zero$. Since $\Ham$ is always decreasing but also lower bounded, we know that its rate of decrease must converge to zero $\dot{\Ham}\rightarrow 0$ (it stops decreasing at some point). Therefore, $\dot{\Ham} = - \beta\qd^T\M_\Lag\qd \rightarrow 0$ which means $\qd \rightarrow \zero$ and hence $\qdd\rightarrow\zero$. 

Plugging $\alpha$ from Equation~\ref{eq:SeparableEnergizationAlpha} into the system in Equation~\ref{eq:DampedSystem} and taking the limit with $\qd,\qdd\rightarrow\zero$ gives
\begin{align}\nonumber
    \qdd &= \h + \f \\\nonumber
    &\ \ \ \ 
    + \left(-\frac{\qd^T(\M_\Lag\h+\mxi_\Lag)}{Z}- \frac{\qd^T(\M_\Lag\f+\partial\psi)}{Z}\right) \qd - \beta\qd \\
    &= \Big(\h - \beta \qd - \frac{\qd\qd^T}{Z}\big(\M_\Lag\h + \mxi_\Lag\big)\Big) \\ \nonumber
    &\ \ \ \ \ \ \ \ \ \ \ \ \ \ \ \ \ \ 
    + \f - \frac{\qd\qd^T}{Z}\Big(\M_\Lag\f - (-\partial\psi)\Big) \\ \label{eq:MainRes}
    &= \V + \f - \frac{\qd\qd^T}{\qd^T\M_\Lag\qd}\Big(\M_\Lag\f - (-\partial\psi)\Big).
\end{align}
Here $\V$ collects the terms in parentheses from the second line which vanish in the limit with $\V \rightarrow \zero$ as $\qd\rightarrow\zero$, and we write $-\partial\psi$ because it's the negative gradient that has positive inner product with $\f$ per the compatibility conditions. On left-hand-side we have $\qdd \rightarrow \zero$, so it's the rest of the terms in Equation~\ref{eq:MainRes} we need to analyze in the limit as $\qd\rightarrow 0$. Note that $\qd^T\qd / (\qd^T\M_\Lag\qd)$ has two factors of $\qd$ in both the numerator and the denominator. Since $\M_\Lag$ is bounded and doesn't vanish in the limit, it limits to a projection operator
\begin{align}
    \A = \frac{\vv\vv^T}{\vv^T\M_\Lag\vv},
\end{align}
where $\vv = \lim_{t\rightarrow\infty}\qd/\|\qd\|$ is the limiting direction of motion as the system comes to a stop. This notation allows us write Equation~\ref{eq:MainRes} as 
\begin{align}
    &\ \ \ \ \ \qdd = \V + \f - \frac{\qd\qd^T}{\qd^T\M_\Lag\qd}\Big(\M_\Lag\f - (-\partial\psi)\Big) \\
    \label{eq:LimitingSystemExpression}
    &\xrightarrow[t \to \infty]{} 
    \zero = \Big[\I - \A\M_\Lag\Big] \f + \A(-\partial\psi).
\end{align}
The matrix $\mP = \I - \A\M_\Lag$ has nullspace $\vv$ since
\begin{align}
    &\Big[\I - \A\M_\Lag\Big]\vv = \vv - \frac{\vv\vv^T}{\vv^T\M_\Lag \vv}\M_\Lag \vv \\
    &\ \ \ \ \ \ \ \ 
    \vv - \vv\left(\frac{\vv^T\M_\Lag \vv}{\vv^T\M_\Lag \vv}\right) = \vv - \vv = \zero.
\end{align}
Likewise, $\A$ is rank-1 with column space spanned by $\vv$, so $\Big[\I - \A\M_\Lag\Big] \f$ and $\A(-\partial\psi)$ must be linearly independent when they're both nonzero. 

We'll prove $\f = \zero$ by contradiction. $\mP\f$ and $\A(-\partial\psi)$ are orthogonal so for Equation~\ref{eq:LimitingSystemExpression} to hold, they must both be zero. If $\f\neq \zero$, then since $\mP\f = \zero$ we must have $\f\in\mathrm{span}(\vv)$. And since $\A(-\partial\psi) = \zero$, we must have either that $\partial\psi = \zero$ or $\partial\psi \perp \vv$ which implies $\partial\psi^T\f = \zero$. Both of these contradict the compatibility conditions. Therefore, $\f = \zero$.
\end{proof}

One simple way to leverage Theorem~\ref{thm:main} is to choose a potential $\psi$ whose zero set $\mathcal{S} = \{\q\:|\:\partial \psi(\q) = \zero\}$ characterizes the goal and then define $\f(\q,\qd)$ so that it's compatible with $\psi$ by construction. For instance, the following $\f$ would be compatible:
\begin{align}
    \f = -\frac{\partial\psi}{\|\partial\psi\| + \epsilon} - \B(\q,\qd) \qd.
\end{align}
The first term is the soft normalized negative gradient, and the second is a damper.
\section{Forcing energized fabrics}
\label{sec:ForcingEnergizedFabrics}

Here we analyze forcing an arbitrary fabric term $\wt{\h}(\q,\qd)$ using a forcing term pushing against a system metric of the type described in Section~\ref{sec:GeneralizedNotation} Equation~\ref{eqn:DecomposedSystem}. This a case is more specific than the general energy regulation settings discussed in Section~\ref{sec:EnergyRegulation}, but it's an important and common one used, for instance, in \cite{xie2022ngf}.
The forcing term in this case takes the form
\begin{align} \label{eq:ForcingTermWithSystemMatrix}
    \f(\q,\qd) = -\M^{-1}\partial\psi - \M^{-1}\B \qd
\end{align}
where $\M(\q,\qd)$ is an arbitrary positive definite system metric and $\B(\q,\qd)$ is an arbitrary positive semi-definite damping matrix.

$\wt{\h}$ can be an arbitrary fabric. 
For instance, we may construct a transform tree, populate its spaces with arbitrary specs, and pull them back into the root. The resulting spec $(\M,\mxi)$ defines a differential equation $\M \qdd + \mxi = \zero$ with acceleration $\qdd = -\M^{-1}\mxi = \h(\q,\qd)$. That $\h$ can then be used to generate the fabric $\wt{\h} = \energize_\Lag\big[\h\big]$ by energization. The matrix $\M$ defines the system metric which we use to define the forcing term given in Equation~\ref{eq:ForcingTermWithSystemMatrix}. If the individual specs on the transform tree are themselves geometric (the metrics are HD0 and the policies are HD2), the resulting fabric is a geometric fabric. Importantly, the metrics don't need to be Finsler (deviating from the theory of \cite{vanwyk2022geometricfabrics}), just HD0.

The following theorem shows that these systems are stable and convergent to the logical minimum of a potential function with appropriate choice of damping.

\begin{theorem}
Let $\wt{\h}(\q,\qd)$ be a fabric with positive-definite system metric $\M(\q,\qd)$, and let $\psi(\q)$ be a potential function. Then we can always find a finite positive definite damping matrix $\B(\q,\qd)$ such that the system
\begin{align}
    \qdd = \wt{\h} - \M^{-1}\big(\partial\psi + \B\qd\big)
\end{align}
converges. And at convergence, by Proposition~\ref{prop:ForcedSystemConvergence} $\psi$ is at a local minimum.
\end{theorem}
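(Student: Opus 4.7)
The plan is a Lyapunov argument built on the total mechanical energy $\Ham(\q,\qd) = \Lag(\q,\qd) + \psi(\q)$, where $\Lag$ is the Finsler energy conserved by the fabric $\wt{\h}$ (Definition~\ref{def:Fabrics}) with energy tensor $\M_\Lag$. By Lemma~\ref{lma:PropertiesOfFabricEnergies}, $\qd^T(\M_\Lag\wt{\h}+\mxi_\Lag)=0$, so the fabric contributes nothing to $\dot{\Lag}$. Substituting the closed-loop acceleration into $\dot{\Ham} = \qd^T(\M_\Lag\qdd+\mxi_\Lag) + \qd^T\partial\psi$ and simplifying yields
\[
\dot{\Ham} \;=\; \qd^T\bigl(\I-\M_\Lag\M^{-1}\bigr)\partial\psi \;-\; \qd^T\M_\Lag\M^{-1}\B\,\qd .
\]

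The second term is the dissipation we control through $\B$; the first is a cross-term that arises because the fabric's Finsler metric $\M_\Lag$ need not coincide with the system metric $\M$. This cross-term is the main obstacle: when $\M_\Lag = \M$ it vanishes and the familiar calculation $\dot{\Ham} = -\qd^T\B\qd \leq 0$ returns immediately, but in general it has no fixed sign. I would absorb it by exploiting the freedom in $\B$: take $\B(\q,\qd) = c(\q,\qd)\,\M$ with a positive state-dependent gain $c$. Then the dissipation becomes $c\,\qd^T\M_\Lag\qd \geq c\,\lambda_{\min}(\M_\Lag)\|\qd\|^2$ while Cauchy--Schwarz bounds the cross-term by $\|\qd\|\cdot\|\I-\M_\Lag\M^{-1}\|\cdot\|\partial\psi\|$. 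Choosing $c$ large enough on each sublevel set of $\Ham$ forces $\dot{\Ham}\leq 0$ outside an arbitrarily small velocity neighborhood of rest, with the cross-term vanishing identically at $\qd=\zero$.

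With $\Ham$ serving as a descending Lyapunov function (bounded below by $\inf\psi$ since $\Lag\geq 0$), trajectories remain bounded and LaSalle's invariance principle places the $\omega$-limit set inside the largest invariant subset of $\{\dot{\Ham}=0\}$, which collapses to $\{\qd=\zero\}$ for the constructed $\B$. At any such limit $(\q^\star,\zero)$, Lemma~\ref{lma:FabricsAreUnbiased} gives $\wt{\h}(\q^\star,\zero)=\zero$, so the forced equation reduces to $-\M^{-1}\partial\psi(\q^\star)=\zero$; invertibility of $\M$ then yields $\partial\psi(\q^\star)=\zero$, which is precisely what Proposition~\ref{prop:ForcedSystemConvergence} gives when applied to the navigation policy $\f = -\M^{-1}(\partial\psi+\B\qd)$. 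Local minimality follows from the monotone descent of $\Ham$, since trajectories cannot asymptote to saddles or maxima of $\psi$ except from a measure-zero exceptional set.
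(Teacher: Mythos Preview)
Your Lyapunov candidate $\Ham = \Lag + \psi$ produces a cross-term you cannot absorb with a finite damper. The computation
\[
\dot{\Ham} \;=\; \qd^T\bigl(\I-\M_\Lag\M^{-1}\bigr)\partial\psi \;-\; \qd^T\M_\Lag\M^{-1}\B\,\qd
\]
is correct, but the first term is \emph{linear} in $\qd$ while the dissipation is quadratic. At any $\q$ with $\partial\psi(\q)\neq\zero$ and $\M_\Lag\neq\M$ there are velocity directions along which the cross-term is positive, and for $\|\qd\|$ small enough it dominates the damping no matter how large (but bounded) you take $c$. To force $\dot{\Ham}\le 0$ pointwise you would need $c \gtrsim \|(\I-\M_\Lag\M^{-1})\partial\psi\|\big/\big(\lambda_{\min}(\M_\Lag)\,\|\qd\|\big)$, which blows up as $\qd\to\zero$ and violates the ``finite $\B$'' hypothesis. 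Your own caveat that $\dot{\Ham}\le 0$ holds only ``outside an arbitrarily small velocity neighborhood of rest'' is therefore fatal: $\Ham$ is not monotone, trajectories are not confined to sublevel sets, and LaSalle's principle does not apply as you invoke it.

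The paper avoids this by building the Lyapunov function from the \emph{system} metric rather than the Finsler one: $V=\tfrac{1}{2}\qd^T\M\qd+\psi$. With this choice the $\partial\psi$ contributions cancel exactly in $\dot{V}$, leaving
\[
\dot{V} \;=\; \qd^T\M\wt{\h} \;+\; \tfrac{1}{2}\qd^T\dot{\M}\qd \;-\; \qd^T\B\qd .
\]
Because $\wt{\h}(\q,\zero)=\zero$ (Lemma~\ref{lma:FabricsAreUnbiased}) the first term is $O(\|\qd\|^2)$, and the second is as well since $\dot{\M}$ stays bounded; both can therefore be dominated by a sufficiently large but \emph{finite} $\B$, yielding $\dot{V}\le -b\|\qd\|^2$ globally and a clean LaSalle argument. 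The mismatch $\M_\Lag\neq\M$ is precisely what forces this switch: using the fabric's conserved energy is the natural instinct, but it is exactly what injects the uncontrollable linear term.
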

\begin{proof}
Suppose our system is
\begin{align} \label{eq:GeneralSystemForm}
    \qdd = \wt{\h} + \f + \alpha_{\Lag} \qd - \beta \qd,
\end{align}
with $\f$ as given by Equation~\ref{eq:ForcingTermWithSystemMatrix}, $\beta \in \mathbb{R}_+$, and where $\alpha_\Lag \in \mathbb{R}$ is the energization coefficient with respect to some energy $\Lag$.
Our proof follows a standard Lyapunov analysis. We design our Lyapunov function as
\begin{align}
    V = \frac{1}{2} \qd^T \M \qd + \psi.
\end{align}
The time derivative of the Lyapunov function is
\begin{align}
    \dot{V} = \qd^T \M \qdd + \frac{1}{2} \qd^T \dot{\M} \qd + \qd^T\partial \psi.
\end{align}
Plugging in $\qdd$ from Equation~\ref{eq:GeneralSystemForm} above yields
\begin{align}
    \nonumber
    \dot{V} = &\qd^T \M \Big(\wt{\h} - \M^{-1} \partial \psi - \M^{-1} \B \qd + \alpha_{\Lag} \qd - \beta \qd\Big) \\
    &\ \ \ + \frac{1}{2} \qd^T \dot{\M} \qd + \partial\psi^T\qd.
\end{align}
Rearranging and canceling terms reduces the expression to
\begin{align}
    \label{eq:Ld_simplified}
    \dot{V} = \qd^T \M \big(\wt{\h} + \alpha_{\Lag} \qd\big) + \frac{1}{2} \qd^T \dot{\M} \qd - \qd^T \B \qd - \beta \qd^T \M \qd.
\end{align}
We now write $\alpha_\Lag$ as the sum of a term $\alpha_0$ designed to remove $\wt{\h}$ and $\dot{\M}$ and a residual $\tilde{\alpha}$. I.e. $\alpha_{\Lag} = \alpha_0 + \tilde{\alpha}$ with 
\begin{align}
    \alpha_0 = \frac{-\qd^T\M\wt{\h} - \frac{1}{2}\qd^T\dot{\M}\qd}{\qd^T\M\qd}
\end{align}
so that
\begin{align}
    \qd^T \M \big(\wt{\h} + \alpha_0 \qd\big) + \frac{1}{2} \qd^T \dot{\M} \qd = 0.
\end{align}
We assume that $\Lag$, $\M$, and $\wt{\h}$ are designed such that the residual $\tilde{\alpha}$ is bounded. 
Substituting $\alpha_{\Lag} = \alpha_0 + \tilde{\alpha}$ into \ref{eq:Ld_simplified} gives
\begin{align}
    \dot{V} = \qd^T \M\big(\wt{\h} + \tilde{\alpha} \qd + \alpha_0 \qd\big) + \frac{1}{2} \qd^T \dot{\M} \qd - \qd^T \B \qd - \beta \qd^T \M \qd.
\end{align}
Regrouping yields
\begin{align}
    \dot{V} &= \Big[\qd^T \M(\wt{\h}+ \alpha_0 \qd) + \frac{1}{2} \qd^T \dot{\M} \qd \Big] \\\nonumber
    &\ \ \ \ \ \ + \tilde{\alpha} \qd^T \M \qd - \qd^T \B \qd - \beta \qd^T \M \qd.
\end{align}
The first group of terms vanish by the design of $\alpha_0$, so we get
\begin{align}
    \dot{V} = \tilde{\alpha} \qd^T \M \qd - \qd^T \B \qd - \beta \qd^T \M \qd.
\end{align}
We combine the two damping terms to produce
\begin{align}
    \dot{V} = \tilde{\alpha} \qd^T \M \qd - \qd^T \tilde{\B} \qd.
\end{align}
This equation can now be upper-bounded via the Rayleigh-Ritz theorem as
\begin{align}
    \dot{V} \leq \overline{\lambda}_M \|\qd\|^2 - \underline{\lambda}_B \|\qd\|^2,
\end{align}
where $\overline{\lambda}_M$ is the maximum eigenvalue of $\tilde{\alpha} \M$ and $\underline{\lambda}_B$ is the minimum eigenvalue of $\tilde{\B}$. Via the design of $\B$ and a sufficiently large $\beta$, we can enforce that $\underline{\lambda}_B > \overline{\lambda}_M$ yielding
\begin{align}
    \dot{V} \leq -b \|\qd\|^2,
\end{align}
where $b = \underline{\lambda}_B - \overline{\lambda}_M > 0$. We now invoke LaSalle's invariant set theorem to give $\|\qd\| \to 0$ as $t \to \infty$. This implies  $\|\qdd\| \to 0$ as $t \to \infty$, and consequently, $\|f\|, \|\partial \psi\| \to 0$ as well. This ultimately guarantees that the system will come to rest at a minimum of $\psi$.
\end{proof}

\section{Numerical Considerations}
\label{sec:NumericalConsiderations}

The mathematical definition of energization given in Definition~\ref{def:Energization} has a numerical instability at $\qd = \zero$. 
The following definition gives two robust variants that can be used for practical implementation. The choice of which to use depends on the properties of the generator being energized as discussed below.

\begin{definition} \label{def:RobustEnergization}
Let $\qdd = \h(\q,\qd)$ be an autonomous second-order differential equation, and let $\Lag$ be an energy. 
The \emph{vanishing energization} transform is defined as
\begin{align}
    &\mathrm{energize}_\Ham^\epsilon[\h] = \h + \alpha \qd \\
    &\ \ \ \ \ \ \ \ \ \ \ \mbox{with}\ \ 
    \alpha = \frac{1}{Z+\epsilon}\qd^T\big(\M_\Lag\h + \mxi_\Lag\big)
\end{align}
for $\epsilon > 0$ where $Z = \qd^T\M_\Lag\qd$. This variant smoothly reduces $\alpha$ to zero as $\qd\rightarrow \zero$ avoiding numerical instability and ambiguity at $\qd = \zero$. Another variant which we call the \emph{robust energization} transform additionally preserves the unbiased property of energization while resolving numerical issues:
\begin{align}
    \mathrm{energize}_{\Ham}^{\eta_\sigma,\epsilon}\big[ \h \big] =
        \eta_\sigma(\|\qd\|)\; \mathrm{energize}_{\Ham}^{\epsilon}\big[ \h \big]
\end{align}
where $\eta_\sigma(s)$ is some function that diminishes to zero as $s\rightarrow 0$ with length scale $\sigma$. For instance, $\eta_\epsilon(s) = 1 - \exp\big\{-s^2/(2\sigma^2)\big\}$ is a common choice.
\end{definition}
The vanishing energization transform is the same as the standard energization transform aside from the $\epsilon$ in the denominator. When the generator $\h$ is unbiased (zero at $\qd = \zero$), this transformed system is also unbiased. The robust energization transform is useful when energizing a biased generator to create an unbiased system. It explicitly includes the $\eta_\sigma$ term to ensure the resulting system is zero at $\qd = \zero$ (unbiased).
\section{Conclusions}
\label{sec:Conclusions}

This paper reformulates fabrics to focus on their fundamental stability as a medium for policies to operate across. The fabric creates a nominal prior behavior which guides the policy. The policy then steers across the system and regulates energy. When the fabric is geometric, it forms a well-defined road network of paths that the system wants to follow. This reformulation is more intuitive than previous formulations, while subsuming those formulations, making the fabrics both flexible and easier to use in practice, particularly for learning applications.

% \clearpage
\bibliographystyle{plainnat}
\bibliography{references}
\clearpage

\end{document}